\title{Successor Features for \\ Transfer in Reinforcement Learning}
\author{ 
  {\bf Andr\'e Barreto}, \hspace{0.5mm}
  {\bf Will Dabney}, \hspace{0.5mm}
  {\bf R\'{e}mi Munos}, \hspace{0.5mm}
  {\bf Jonathan J. Hunt}, \hspace{0.5mm} \\
  {\bf Tom Schaul}, \hspace{0.5mm}
  {\bf Hado van Hasselt}, \hspace{0.5mm}
  {\bf David Silver} \vspace{1mm} \\ 
   \texttt{\small \{andrebarreto,wdabney,munos,jjhunt,schaul,hado,davidsilver\}@google.com} \vspace{1.5mm} \\
  DeepMind 
}
\newcommand{\cp}{\citep}
\newcommand{\ca}{\citeauthor}
\newcommand{\cy}{\citeyear}
\newcommand{\ct}{\citet}
\newcommand{\ctp}[1]{\ca{#1}'s~\cite{#1}}
\newcommand{\cwp}[1]{\ca{#1}, \cite{#1}}
\newcommand{\mat}[1]{\ensuremath{\boldsymbol{\mathrm{#1}}}}
\newtheorem{theorem}{Theorem}
\newtheorem{lemma}{Lemma}
\newcommand{\R}{\ensuremath{\mathbb{R}}}
\newcommand{\M}{\ensuremath{\mathcal{M}^{\phi}}}
\newcommand{\MM}{\ensuremath{\mathcal{M}}} 
\newcommand{\E}{\ensuremath{\mathrm{E}}}
\newcommand{\Normal}{\ensuremath{\mathcal{N}}}
\newcommand{\w}{\mat{w}}
\newcommand{\z}{\mat{z}}
\newcommand{\Z}{\mat{Z}}
\newcommand{\vphi}{\mat{\phi}}
\newcommand{\vpsi}{\mat{\psi}}
\newcommand{\vprm}{\mat{\z}}
\newcommand{\tvphi}{\ensuremath{\tilde{\mat{\phi}}}}
\newcommand{\tvpsi}{\ensuremath{\tilde{\mat{\psi}} }}
\newcommand{\tmpsi}{\ensuremath{\tilde{\mat{\Psi}} }}
\renewcommand{\t}{\ensuremath{\top}}
\renewcommand{\th}[1]{\text{#1$^\mathrm{th}$}}
\newcommand{\ith}{\th{i}}
\newcommand{\kth}{\th{k}}
\newcommand{\argmin}{\ensuremath{\mathrm{argmin}}}
\newcommand{\argmax}{\ensuremath{\mathrm{argmax}}}
\renewcommand{\min}{\ensuremath{\mathrm{min}}}
\newcommand{\Qmax}{\ensuremath{Q_{\mathrm{max}}}}
\newcommand{\pihmax}{\ensuremath{\pi}}
\newcommand{\rdif}{\ensuremath{\delta}}
\newcommand{\Qt}{\ensuremath{\tilde{Q}}}
\newcommand{\Qtmax}{\ensuremath{\tilde{Q}_{\mathrm{max}}}}
\newcommand{\pitmax}{\ensuremath{{\pi}}}
\newcommand{\Qpitmax}{\ensuremath{Q^{\pi}}}
\newcommand{\Tpitmax}{\ensuremath{T^{\pi}}}
\newcommand{\wt}{\ensuremath{\tilde{\w}}}
\renewcommand{\H}{\mat{H}}
\newcommand{\piexpj}{\ensuremath{_{\pi_j^*}}}
\newcommand{\piexpi}{\ensuremath{_{\pi_i^*}}}
\newcommand{\piexpk}{\ensuremath{_{\pi_k^*}}}
\newcommand{\la}{\ensuremath{\leftarrow}}
\newcommand{\phimax}{\ensuremath{\vphi_{\max}}}
\renewcommand{\S}{\ensuremath{\mathcal{S}}}
\newcommand{\A}{\ensuremath{\mathcal{A}}}
\newcommand{\W}{\ensuremath{\mathcal{W}}}
\newcommand{\T}{\ensuremath{\mathcal{T}}}
\newcommand{\score}{\ensuremath{\mathrm{score}}}
\newcommand{\currscore}{\ensuremath{\mathrm{curr\_score}}}
\newcommand{\ntpu}{\ensuremath{\mathrm{used}}}
\newcommand{\act}{\ensuremath{\mathrm{c}}}
\newcommand{\I}[1]{\ensuremath{\delta\{#1\}}} 
\newcommand{\II}{\ensuremath{\delta}} 
\newcommand{\vf}{\ensuremath{\mat{\varphi}}}
\newcommand{\vfi}{\ensuremath{\mat{\varphi}_i}}
\newcommand{\vfp}{\ensuremath{\mat{\varphi}_p}}
\newcommand{\vweights}{\mat{z}}
\newcommand{\od}{\ensuremath{\mat{o}}}
\begin{document}

\maketitle

\begin{abstract}
Transfer in reinforcement learning 
refers to the notion that generalization should occur 
not only within a task but also across tasks.
We propose a transfer framework for 
the scenario where the reward function
changes between tasks but 
the environment's dynamics remain the same.
Our approach rests on two key ideas: 
\emph{successor features}, a value function representation that decouples 
the dynamics of the environment from the rewards, and 
\emph{generalized policy improvement}, a 
generalization of dynamic programming's 
policy improvement operation that considers 
a set of policies rather than a single one.
Put together, the two ideas lead to 
an approach that integrates seamlessly within the 
reinforcement learning framework and allows the free
exchange of information across tasks. 
The proposed method also provides 
performance guarantees for
the transferred policy even before any learning has taken place.
We derive two theorems that set our approach in firm 
theoretical ground and present experiments that show that
it successfully promotes transfer in practice, significantly
outperforming alternative methods
in a sequence of navigation tasks 
and in the control of a simulated robotic arm.
\end{abstract}

\section{Introduction}
\label{sec:introduction}

Reinforcement learning (RL) provides a framework 
for the development of situated agents 
that learn how to behave while interacting with 
the environment~\cp{sutton98reinforcement}. 
The basic RL loop is defined in an abstract way so as to
capture only the essential aspects of this interaction:
an agent receives observations and selects actions 
to maximize a reward signal.
This setup is generic enough to describe tasks of different levels 
of complexity that may unroll at distinct time scales.
For example, in the task of driving a car, an action 
can be to turn the wheel, make a right turn, or 
drive to a given location.

Clearly, from the point of view of the designer, it is desirable 
to describe a task at the highest level of abstraction possible. 
However, by doing so one may overlook behavioral patterns 
and inadvertently make the task more difficult than it really is.
The task of driving to a location clearly encompasses the subtask of making 
a right turn, which in turn encompasses the action of turning the wheel.
In learning how to drive an agent should be able to identify and exploit 
such interdependencies. More generally, the agent should be able to break 
a task into smaller subtasks and use knowledge 
accumulated in any subset of those to speed up learning in related tasks.
This process of leveraging knowledge acquired in one task to improve 
performance 
on other tasks is called 
\emph{transfer}~\cp{taylor2009transfer,lazaric2012transfer}.

In this paper we look at one specific type of transfer, namely,
when subtasks correspond to different reward functions defined 
in the same environment.
This setup is flexible enough to allow transfer to happen at different levels.
In particular, by appropriately defining the rewards one can
induce different task decompositions. For instance, 
the type of hierarchical decomposition involved in the driving example above
can be induced by changing the frequency 
at which rewards are delivered: a positive 
reinforcement can be given after each maneuver that is well executed 
or only at the final destination.
Obviously, one can also decompose a task into 
subtasks that are independent 
of each other or whose dependency is strictly temporal (that is,
when tasks must be executed in a certain order but no single 
task is clearly ``contained'' within another).

The types of task decomposition discussed above 
potentially allow the agent to tackle 
more complex problems than would be possible were
the tasks modeled as a single monolithic challenge. 
However, in order to apply this divide-and-conquer strategy 
to its full extent the agent 
should have an explicit mechanism to promote transfer
between tasks.
Ideally, we want a transfer approach to have two important properties.
First, the flow of information between tasks should not be dictated by a 
rigid diagram that reflects the relationship between the tasks themselves,
such as hierarchical or temporal dependencies. On the contrary, 
information should be exchanged across tasks whenever useful. 
Second, rather than being posed as a separate problem, transfer should 
be integrated into the RL framework as much as possible, preferably
in a way that is almost transparent to the agent.

In this paper we propose an approach 
for transfer that has the two properties above.
Our method builds on two conceptual pillars that complement each other.
The first is a generalization of \ctp{dayan93improving}
\emph{successor representation}. As the name suggests, in this 
representation scheme each state is described by 
a prediction about the future occurrence of all 
states under a fixed policy.
We present a generalization of \ca{dayan93improving}'s idea
which extends the original scheme to continuous spaces and also 
facilitates the use of approximation. 
We call the resulting scheme \emph{successor features}.
As will be shown, successor features lead to a 
representation of the value function that naturally decouples the dynamics 
of the environment from the rewards, which makes them particularly suitable 
for transfer. 

The second pillar of our framework 
is a generalization of 
\ctp{bellman57dynamic} classic 
policy improvement theorem that extends the original result from one to 
multiple decision policies. 
This novel result shows how knowledge about a set of tasks can be 
transferred to a new task in a way that is completely integrated within RL. 
It also provides performance guarantees on the new task 
\emph{before} any learning has taken place, which opens up the possibility of 
constructing a library of ``skills'' that can be reused 
to solve previously unseen tasks.
In addition, we present a theorem 
that formalizes the notion that an agent should be able to perform well on 
a task if it has seen a similar task before---something clearly 
desirable in the context of transfer. 
Combined, the two results above 
not only set our approach in firm 
ground but also outline the mechanics 
of how to actually implement transfer.
We build on this knowledge to 
propose a concrete method and 
evaluate it in two environments, one encompassing 
a sequence of navigation tasks 
and the other involving the control of a simulated two-joint 
robotic arm.

\section{Background and problem formulation}
\label{sec:background}

As usual, we assume that the interaction between agent and environment
can be modeled as a \emph{Markov decision process} (MDP, 
\cwp{puterman94markov}). 
An MDP is defined as a tuple $M \equiv (\S,\A,p,R,\gamma)$. 
The sets $\S$ and 
$\A$ are
the state and action spaces, respectively;
here we assume that $\S$ and $\A$ are finite whenever such an assumption 
facilitates the presentation, but most of the ideas readily extend
to continuous spaces.
For each $s \in \S$ and $a \in \A$ the function 
 $p(\cdot|s,a)$ gives the next-state 
distribution upon taking action $a$ in state $s$. 
We will often refer to $p(\cdot|s,a)$ as the \emph{dynamics} of the MDP.
The reward received at transition $s \xrightarrow{a}
s'$ is given by the random variable 
$R(s,a,s')$; usually one is interested in
the expected value of this variable, which we will 
denote by $r(s,a,s')$ or 
by $r(s,a) = \E_{S' \sim p(\cdot|s,a)}[r(s, a, S')]$.
The discount factor $\gamma \in [0,1)$ gives 
smaller weights to future rewards. 

The objective of the agent in RL is to find a policy $\pi$---a
mapping from states to actions---that maximizes the expected 
discounted sum of rewards, also called the \emph{return}
$
G_{t} = \sum_{i=0}^{\infty} \gamma^{i} R_{t+i+1},
$
where $R_t = R(S_t, A_t, S_{t+1})$.
One way to address this problem is to use methods
derived from \emph{dynamic programming} (DP), which heavily rely on  the concept
of a {\em value function}~\cp{puterman94markov}.
The \emph{action-value function} of a policy $\pi$
is defined as 
\begin{equation}
\label{eq:Q}
Q^{\pi}(s,a) \equiv \E^{\pi} \left[ G_{t} \,|\, S_{t} = s, A_{t} = a \right],
\end{equation}
where $\E^{\pi}[\cdot]$ denotes expected value when following policy $\pi$.
Once the action-value function of a particular policy
$\pi$ is known, we can derive a new policy $\pi'$ which is 
\emph{greedy} with respect to $Q^{\pi}(s,a)$,
that is, 
$\pi'(s) \in \argmax_{a} Q^{\pi}(s,a)$.
Policy $\pi'$ is guaranteed to be at least as good as (if not
better than) policy $\pi$.
The computation of  $Q^{\pi}(s,a)$ and 
$\pi'$, called {\em policy evaluation} and {\em policy
improvement}, 
define the basic mechanics of RL
algorithms based on DP;
under certain conditions their successive application 
leads to an optimal policy
$\pi^{*}$ that maximizes the expected return 
from every $s \in \S$~\cite{sutton98reinforcement}.

In this paper we are 
interested in the problem of \emph{transfer},
which we define as follows.
Let $\T, \T'$ be two sets of tasks such that $\T' \subset \T$, 
and let $t$ be any task. 
Then there is \emph{transfer} if, 
after training on $\T$, the agent always performs as
well or better on task $t$ than if only trained on $\T'$.
Note that $\T'$ can be the empty set.
In this paper a task will be defined as a specific instantiation of the 
reward function $R(s,a,s')$ for a given MDP.
In Section~\ref{sec:transfer} we will revisit this definition and make it 
more formal.

\section{Successor features}
\label{sec:sf}

In this section we present the concept that will serve as a 
cornerstone for the rest of the paper. We start by presenting a simple reward 
model and then show how it naturally leads to a generalization of 
\ctp{dayan93improving} successor representation (SR).

Suppose that the expected one-step reward associated with transition
$(s, a, s')$ can be computed as
\begin{equation}
\label{eq:reward}
r(s,a, s') = \vphi(s,a, s')^\t\w,
\end{equation}
where $\vphi(s,a, s') \in \R^{d}$ are features of $(s,a,s')$ and 
$\w \in \R^{d}$ are weights. 
Expression~(\ref{eq:reward}) is not restrictive 
because we are not making any assumptions about
$\vphi(s,a,s')$: if we have $\phi_i(s,a,s') = r(s,a,s')$ for some $i$, for 
example, we
can clearly recover any reward function exactly.
To simplify the notation, let $\vphi_t = \vphi(s_t, a_t, s_{t+1})$. Then, 
by simply rewriting the definition of the action-value function in~(\ref{eq:Q}) 
we have
 \begin{align}
  \label{eq:sf} 
  \nonumber Q^{\pi}(s,a) 
   & = \E^{\pi}\left[ r_{t+1} + \gamma r_{t+2} +  ... \,|\, S_t = s, A_t = a 
\right]  \\
\nonumber & = \E^{\pi}\left[ \vphi_{t+1}^{\t} \w + \gamma \vphi_{t+2}^{\t} \w 
+ ... \, | \, S_t = s, A_t = a \right] \\
  & = \E^{\pi} \left[{\textstyle \sum_{i=t}^{\infty}} \gamma^{i-t} 
\vphi_{i+1} \,|\, S_t = s, A_t = a \right]^{\t} \w 
  = \vpsi^{\pi}(s,a)^{\t} \w.
 \end{align}
The decomposition~(\ref{eq:sf}) has appeared before
in the literature under different 
names and interpretations, as discussed in 
Section~\ref{sec:related_work}. 
Since here we propose to look at~(\ref{eq:sf})
as an extension of \ctp{dayan93improving} SR,
we call $\vpsi^{\pi}(s,a)$ the 
\emph{successor features} (SFs) of $(s,a)$ under policy $\pi$. 

The \ith\ component of $\vpsi^\pi(s,a)$ gives the 
expected discounted sum of
$\phi_i$ when following policy $\pi$ starting from 
$(s,a)$. In the particular case where $\S$ and $\A$ are finite and 
$\vphi$ is a tabular representation 
of $\S \times \A \times \S$---that is, $\vphi(s,a,s')$ is a one-hot vector 
in $\R^{|\S|^2|\A|}$---$\vpsi^\pi(s,a)$ is the discounted sum of 
occurrences,  under $\pi$, of each possible transition. 
This is essentially the concept of SR
extended from the space $\S$ to the set $\S \times \A \times 
\S$~\cp{dayan93improving}.

One of the contributions of this paper is precisely to generalize 
SR to be used with function approximation, 
but the exercise of deriving the concept as above provides 
insights already in the tabular case. 
To see this, note that in the tabular case the entries of $\w \in 
\R^{|\S|^2|\A|}$
are the function $r(s,a,s')$ and 
suppose that $r(s,a,s') \ne 0$ in only a small 
subset $\W \subset \S \times \A \times \S$. 
From~(\ref{eq:reward}) and~(\ref{eq:sf}), 
it is clear that 
the cardinality of $\W$, and not of $\S \times \A \times \S$, is what 
effectively 
defines the dimension of the representation $\vpsi^{\pi}$, since there 
is no point in having $d > |\W|$. Although this fact is hinted at 
by \ct{dayan93improving}, it becomes more apparent 
when we look at SR as a particular case of SFs.

SFs extend SR in two other ways. 
First, the concept readily applies to continuous state and 
action 
spaces
without any modification. Second, 
by explicitly casting~(\ref{eq:reward}) and~(\ref{eq:sf}) as inner products
involving feature vectors, 
SFs make it evident how to incorporate 
function approximation: as will be shown, these vectors can
be learned from data.

The representation in~(\ref{eq:sf}) requires two components to be learned,
\w\ and $\vpsi^{\pi}$. Since the latter is the expected discounted sum of 
$\vphi$ under $\pi$, we must either be given $\vphi$ or learn it as well.
Note that approximating $r(s,a,s') \approx \vphi(s,a,s')^\t\wt$ 
is a supervised learning problem, 
so we can use 
well-understood techniques from the field 
to learn \wt\ (and potentially \tvphi, 
too)~\cp{hastie2002elements}. 
As for $\vpsi^{\pi}$, we note that 
 \begin{align}
 \label{eq:bellman_psi}
  \vpsi^\pi(s, a) 
  & = \vphi_{t+1}+ \gamma E^{\pi}[\vpsi^\pi(S_{t+1}, \pi(S_{t+1}))  \,|\, S_t = s, A_t = a ],
\end{align}
that is, SFs satisfy a Bellman equation in which 
$\phi_i$ play the role of rewards---something also 
noted by \ct{dayan93improving}
regarding SR. Therefore, in principle \emph{any} 
RL method can be used to compute 
$\vpsi^{\pi}$~\cp{szepesvari2010algorithms}.

The SFs $\vpsi^{\pi}$ summarize the dynamics 
induced by $\pi$ in a given environment.
As shown in~(\ref{eq:sf}), this allows for a 
modular representation of $Q^{\pi}$ in which 
the MDP's dynamics are decoupled from its rewards, which are 
captured by the weights $\w$.
One potential benefit of having such a decoupled representation is that 
only the relevant module must be relearned when either the dynamics or the 
reward changes, which may serve as an argument in favor of adopting SFs 
as a general approximation scheme for RL. 
However, in this paper we focus on a scenario where the decoupled
value-function approximation provided by SFs is exploited to its full extent, 
as we discuss next.

\section{Transfer via successor features}
\label{sec:transfer}

We now return to the discussion about transfer in RL.
As described, we are interested in the scenario where 
all components of an MDP are fixed, except for the reward function.
One way to formalize this model is through~(\ref{eq:reward}):
if we suppose that $\vphi \in \R^{d}$ is fixed, any 
$\w \in \R^d$ gives rise to a new MDP. 
Based on this observation, we define 
\begin{equation}
\label{eq:M}
\M(\S, \A, p, \gamma) \hspace{-1mm} \equiv \{ M(\S, \A, p, r, \gamma) \; | \;
r(s,a,s') \hspace{-1mm} = \vphi(s,a,s')^\t\w \},
\end{equation}
that is, \M\ is the set of MDPs induced by $\vphi$ through 
all possible instantiations of \w.
Since what differentiates the MDPs in \M\ is essentially the agent's goal,
we will refer to $M_i \in \M$ as 
a \emph{task}. 
The assumption is that we are interested in solving (a subset of) the 
tasks in the environment \M. 

Definition~(\ref{eq:M}) 
is a natural way of modeling some scenarios of interest.
Think, for example, how the desirability of water or
food changes depending on whether an animal is thirsty or hungry. 
One way to model this type of preference shifting is to suppose 
that the vector \w\ appearing in~(\ref{eq:reward}) reflects 
the taste of the agent at any given point in time~\cp{natarajan2005dynamic}. 
Further in the paper we will present experiments that reflect this scenario.
For another illustrative example, imagine that the agent's goal is to produce 
and sell a combination of goods whose production line is relatively stable
but whose prices vary considerably over time. In this case updating the price
of the products corresponds to picking a new \w. 
A slightly different way of motivating~(\ref{eq:M}) is to suppose that the 
environment itself is changing, that is, the element $\w_i$ 
indicates not only desirability, but also 
availability, of feature $\phi_i$.

In the examples above it is desirable for the agent to build 
on previous 
experience to improve its performance on a new setup. More concretely, 
if the agent knows good policies for the set of tasks
$\MM \equiv \{M_1, M_2, ..., M_n\}$, with $M_i \in \M$, it should be able to 
leverage    
this knowledge to improve its behavior on a new
task $M_{n+1}$---that is, it should perform better than it would had it been 
exposed to only a subset of the original tasks, $\MM' \subset \MM$.
We can assess the performance of an agent on task $M_{n+1}$
based on the value function of the policy followed 
after $\w_{n+1}$ has become available but 
\emph{before} any policy improvement has taken place 
in $M_{n+1}$.\footnote{Of course $\w_{n+1}$ can, and will be, learned, 
as discussed in Section~\ref{sec:gpi_sf} and illustrated in 
Section~\ref{sec:experiments}. Here we assume that $\w_{n+1}$
is given to make our performance criterion clear.}
More precisely, suppose that an agent has 
been exposed to each one of the tasks $M_i \in \MM'$.
Based on this experience, and on the new $\w_{n+1}$,
 the agent computes a policy $\pi'$ 
that will define its initial behavior in $M_{n+1}$.
Now, if we repeat the experience replacing $\MM'$ with $\MM$,
the resulting policy $\pi$ should be such that 
$Q^{\pi}(s,a) \ge Q^{\pi'}(s,a)$ for all $(s,a) \in \S \times \A$.

Now that our setup is clear we can start to describe our solution 
for the transfer problem discussed above. We do so in two stages.
First, we present a generalization of DP's notion
of policy improvement whose interest may go beyond the current work.
We then show how SFs can be used to implement
this generalized form of policy improvement in an efficient and elegant way.  

\subsection{Generalized policy improvement}
\label{sec:gpi}

One of the fundamental results in RL is \ctp{bellman57dynamic} 
\textit{policy improvement theorem}. 
In essence, the theorem states 
that acting greedily with respect 
to a policy's value function gives rise to another policy whose performance
is no worse than the former's. This is the driving force behind 
DP, and most RL algorithms that compute a value function are exploiting 
Bellman's result in one way or another. 

In this section we extend the policy improvement theorem to the scenario where 
the new policy is to be computed based on the value functions 
of a \emph{set} of policies.
We show that this extension can be done 
in a natural way, by acting 
greedily with respect to the maximum over 
the value functions available. Our result is 
summarized in the theorem below. 
\begin{theorem}
\label{teo:gpi}
{\bf (Generalized Policy Improvement)} Let $\pi_1$, $\pi_2$, ..., $\pi_n$ be $n$ 
decision policies and 
let $\Qt^{\pi_1}$, $\Qt^{\pi_2}$, ..., $\Qt^{\pi_n}$ be approximations of their 
respective action-value functions
such that 
\begin{equation}
\label{eq:epsilon}
|Q^{\pi_i}(s,a) - \Qt^{\pi_i}(s,a)| \le \epsilon \, \text{ for all } s \in \S, a \in \A, \text{ and }  i \in \{1, 2, ..., n\}.
\end{equation}
Define
\begin{equation}
\label{eq:pitmax}
\pitmax(s) \in \mathop{\argmax}_a \max_i \Qt^{\pi_i}(s,a).
\end{equation}
Then, 
\begin{equation}
\label{eq:Qpitmax}
\Qpitmax(s,a)  \ge \max_i Q^{\pi_i}(s,a) - \dfrac{2}{1 - \gamma} \epsilon
\end{equation}
for any $s \in \S$ and $a \in \A$,
where \Qpitmax\ is the action-value function of \pitmax.
\end{theorem}
The proofs of our theoretical results are in the supplementary material.
As one can see, our theorem covers the case where the policies' value functions 
are not computed exactly, either because function approximation is used 
or because some exact algorithm has not be run to completion. 
This error is captured by $\epsilon$ in~(\ref{eq:epsilon}), which 
re-appears as a penalty term in the lower bound~(\ref{eq:Qpitmax}).
Such a penalty is inherent to the presence of approximation in RL,
and in fact it is identical to the penalty incurred in the single-policy case
(see {\sl e.g.} \ca{bertsekas96neuro-dynamic}'s Proposition~6.1~\cite{bertsekas96neuro-dynamic}).

In order to contextualize generalized policy improvement (GPI) 
within the broader scenario 
of DP, suppose for a moment that $\epsilon = 0$.
In this case Theorem~\ref{teo:gpi} states that $\pi$ will perform no worse 
than \emph{all} of the policies 
$\pi_1$, $\pi_2, ..., \pi_n$. 
This is interesting because in general $\max_i Q^{\pi_i}$---the function 
used to induce $\pi$---is not the value function 
of any particular policy.
It is not difficult to see that $\pi$
will be strictly better than all previous policies if 
no single policy dominates all other policies, that is, if
$\argmax_i \max_a \Qt^{\pi_i}(s,a) \cap \argmax_i \max_a \Qt^{\pi_i}(s',a) = 
\emptyset$
for some $s, s' \in \S$.
If one policy does dominate all others, GPI reduces to 
the original policy improvement theorem. 

If we consider the usual DP loop, in which policies 
of increasing performance are computed in sequence,  
our result is not of much use because the
most recent policy will always dominate all others. 
Another way of putting it is to say that after Theorem~\ref{teo:gpi}
is applied once adding the resulting $\pi$ to the set 
$\{\pi_1$, $\pi_2, ..., \pi_n\}$ will reduce the next improvement step 
to standard policy improvement, and thus the policies  
$\pi_1$, $\pi_2, ..., \pi_n$ can be simply discarded.
There are however two situations in which our result may be of interest.
One is when we have many policies $\pi_i$ being evaluated in parallel.
In this case GPI provides a principled strategy for combining 
these policies. 
The other situation in which our result may be useful 
is when the underlying MDP changes, as we discuss next.

\subsection{Generalized policy improvement with successor features}
\label{sec:gpi_sf}

We start this section by extending our notation slightly 
to make it easier to 
refer to the quantities involved in transfer learning. 
Let $M_i$ be a task in \M\ defined by $\w_i \in \R^d$.
We will use $\pi^*_i$ to refer to an optimal policy of MDP $M_i$
and use $Q^{\piexpi}_i$ to refer to its value function.
The value function of $\pi^*_i$ when executed in $M_j \in \M$
will be denoted by $Q^{\piexpi}_j$. 

Suppose now that an agent has computed optimal policies for the tasks 
$M_1, M_2, ..., M_n \in \M$. 
Suppose further that when presented with a new task $M_{n+1}$ the agent computes
$\{Q^{\pi_1^*}_{n+1}, Q^{\pi_2^*}_{n+1},..., Q^{\pi_n^*}_{n+1}\}$, the evaluation of each $\pi^*_i$ 
under the new reward function induced by $\w_{n+1}$.
In this case, applying the GPI theorem 
to the newly-computed set of value 
functions will give rise to a policy that performs at least as well as 
a policy based on any subset of these, including the empty set. 
Thus, this strategy satisfies our definition of successful transfer.

There is a caveat, though. Why would one waste time computing the value 
functions of 
$\pi_1^*, \pi^*_2$, ..., $\pi_n^*$, whose performance in $M_{n+1}$ may be 
mediocre,
if the same amount of resources can be allocated to compute a sequence of 
$n$ policies with increasing performance? This is where SFs 
come into play.
Suppose that we have learned the functions $Q^{\piexpi}_i$ using 
the representation scheme shown in~(\ref{eq:sf}).
Now, if the reward changes to 
$r_{n+1}(s,a,s') = \vphi(s,a,s')^\t\w_{n+1}$, 
as long as we have $\w_{n+1}$  
we can compute the new value function 
of $\pi_i^*$ by simply making 
$Q^{\piexpi}_{n+1}(s,a) = \vpsi^{\piexpi}(s,a)^\t\w_{n+1}$.
This reduces the computation of all 
${Q}^{\piexpi}_{n+1}$ to the much simpler supervised problem of 
approximating~$\w_{n+1}$. 

Once the functions ${Q}^{\piexpi}_{n+1}$ have been computed, we can apply GPI
to derive a policy $\pi$ whose performance on $M_{n+1}$ is no worse than the 
performance 
of $\pi_1^*, \pi^*_2, ..., \pi^*_n$ on the same task. A question 
that arises in this case is whether we can provide stronger guarantees on the 
performance of 
$\pi$ by exploiting the structure shared by the tasks in \M.
The following theorem answers this question in the affirmative. 

\begin{theorem}
\label{teo:gpi_sf}
Let $M_i \in \M$ and let $Q^{\piexpj}_i$ be the action-value function of an optimal 
policy of $M_j \in \M$ when executed in $M_i$. Given approximations 
$\{\Qt^{\pi_1^{*}}_i, \Qt^{\pi_2^{*}}_i, ..., \Qt^{\pi_n^{*}}_i\}$ such that 
\begin{equation}
\label{eq:ub_epsilon}
 \left|Q^{\piexpj}_i(s, a) - \Qt^{\piexpj}_i(s,a) \right| \le \epsilon
\end{equation}
for all $s \in \S$, $a \in \A$, and $j \in \{1, 2, ..., n\}$,
let 
$
\label{eq:pimax}
\pihmax(s) \in \mathop{\argmax}_a \max_j \Qt^{\piexpj}_i (s,a).
$
Finally, let $\phimax = \max_{s,a} ||\vphi(s,a)||$, where $||\cdot||$ is 
the norm induced by the inner product adopted. Then,
\begin{equation}
\label{eq:bound_w}
Q^{\piexpi}_i(s,a) - Q^{\pi}_i(s,a) 
\le \dfrac{2 }{1- \gamma}  \left(\phimax \, \min_j || \w_i - \w_j|| + \epsilon 
\right).
\end{equation}
\end{theorem}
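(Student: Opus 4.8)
The plan is to combine Theorem~\ref{teo:gpi} with a quantitative bound on how much the value of a \emph{fixed} policy can change when we move from one task of $\M$ to another. The key observation is that all tasks in $\M$ share the same features $\vphi$ and the same dynamics, so for any policy $\pi$ and any $M_i,M_j\in\M$ the successor features $\vpsi^\pi$ are identical in both tasks and, by the decomposition~\eqref{eq:sf},
\[
  \left| Q^\pi_i(s,a) - Q^\pi_j(s,a) \right| = \left| \vpsi^\pi(s,a)^\t(\w_i-\w_j) \right| \le \left\| \vpsi^\pi(s,a) \right\|\,\| \w_i - \w_j \| \le \frac{\phimax}{1-\gamma}\,\| \w_i - \w_j \| ,
\]
using Cauchy--Schwarz and the bound $\|\vpsi^\pi(s,a)\| \le \sum_{k\ge 0}\gamma^k\phimax = \phimax/(1-\gamma)$ obtained from the definition of $\vpsi^\pi$ and the triangle inequality.

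Next I would use this inequality, together with the optimality of $\pi_j^{*}$ in $M_j$, to control the suboptimality of each policy $\pi_j^{*}$ when run in $M_i$. Routing through the task in which $\pi_j^{*}$ is actually optimal gives the chain
\[
  Q^{\piexpi}_i(s,a) \le Q^{\piexpi}_j(s,a) + \tfrac{\phimax}{1-\gamma}\|\w_i-\w_j\| \le Q^{\piexpj}_j(s,a) + \tfrac{\phimax}{1-\gamma}\|\w_i-\w_j\| \le Q^{\piexpj}_i(s,a) + \tfrac{2\phimax}{1-\gamma}\|\w_i-\w_j\| ,
\]
where the first and third steps apply the value-difference bound to $\pi_i^{*}$ and to $\pi_j^{*}$ respectively, and the middle step is $Q^{\piexpi}_j \le Q^{\piexpj}_j$. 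Since this holds for every $j$, taking the best $j$ yields $Q^{\piexpi}_i(s,a) - \max_j Q^{\piexpj}_i(s,a) \le \frac{2\phimax}{1-\gamma}\min_j\|\w_i-\w_j\|$.

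Finally, I would invoke Theorem~\ref{teo:gpi} applied to the policies $\pi_1^{*},\dots,\pi_n^{*}$ evaluated in $M_i$, with the approximations $\Qt^{\piexpj}_i$ satisfying~\eqref{eq:ub_epsilon}; this gives $Q^{\pi}_i(s,a) \ge \max_j Q^{\piexpj}_i(s,a) - \frac{2}{1-\gamma}\epsilon$. Adding this to the previous display and splitting $Q^{\piexpi}_i - Q^{\pi}_i = \bigl(Q^{\piexpi}_i - \max_j Q^{\piexpj}_i\bigr) + \bigl(\max_j Q^{\piexpj}_i - Q^{\pi}_i\bigr)$ produces exactly the stated bound~\eqref{eq:bound_w}. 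The main obstacle is getting the middle chaining argument right: one must pass through $M_j$ to exploit the optimality of $\pi_j^{*}$ (this is what produces the factor of $2$ rather than $1$) and must be sure the per-step value-difference bound is uniform over policies — which it is, precisely because $\vpsi^\pi$ is governed by the dynamics, which are held fixed across $\M$. Everything else is routine bookkeeping.
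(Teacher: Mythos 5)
Your proof is correct, and the overall skeleton is the same as the paper's (it is essentially forced): apply Theorem~\ref{teo:gpi} to the policies $\pi_1^*,\dots,\pi_n^*$ evaluated in $M_i$, and separately bound $Q^{\piexpi}_i(s,a)-\max_j Q^{\piexpj}_i(s,a)$. Where you genuinely diverge is in how that second gap is bounded. The paper proves a standalone lemma stating $Q^{\piexpi}_i(s,a)-Q^{\piexpj}_i(s,a)\le 2\,\rdif_{ij}/(1-\gamma)$ with $\rdif_{ij}=\max_{s,a}|r_i(s,a)-r_j(s,a)|$, established by a triangle inequality through $Q^{\piexpj}_j$ and two Bellman-operator contraction arguments (in the style of Strehl and Littman), and only afterwards invokes Cauchy--Schwarz on $r_i-r_j=\vphi^\t(\w_i-\w_j)$. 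You instead exploit the SF structure directly: since the dynamics are shared, $Q^{\pi}_i(s,a)-Q^{\pi}_j(s,a)=\vpsi^{\pi}(s,a)^\t(\w_i-\w_j)$ for any fixed $\pi$, and Cauchy--Schwarz together with $\|\vpsi^{\pi}(s,a)\|\le\phimax/(1-\gamma)$ gives the per-policy bound; your three-step chain through $M_j$ (with $Q^{\piexpi}_j\le Q^{\piexpj}_j$ supplying the middle, nonpositive term) plays the role of the paper's triangle inequality. Both routes yield the identical constant. The paper's lemma is slightly more general as an intermediate result---it holds for any two tasks with shared dynamics and is stated in terms of the sup-norm reward difference, which can be smaller than $\phimax\|\w_i-\w_j\|$, and it needs no linear reward structure---whereas your argument is shorter, stays entirely within the SF formalism, and makes transparent that the $1/(1-\gamma)$ factor comes from the norm of the successor features rather than from a contraction fixed-point argument.
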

Note that we used $M_i$ rather than $M_{n+1}$ in the theorem's 
statement to remove any suggestion of order among the tasks. 
Theorem~\ref{teo:gpi_sf} is a specialization of Theorem~\ref{teo:gpi} for the 
case where the set of value functions used to compute $\pi$ 
are associated with tasks in the form 
of~(\ref{eq:M}). As such, it provides stronger guarantees: instead
of comparing the performance of $\pi$ with that of the previously-computed 
policies $\pi_j$, Theorem~\ref{teo:gpi_sf} 
quantifies the loss incurred by following $\pi$ 
as opposed to one of $M_i$'s optimal policies.

As shown in~(\ref{eq:bound_w}), the loss $Q^{\piexpi}_i(s,a) - Q^{\pi}_i(s,a)$
is upper-bounded by two terms.
The term 
$2\phimax \min_j || \w_i - \w_j|| / (1- \gamma)$
is of more interest here because it 
reflects the structure of \M.
This term is a multiple of the distance between 
$\w_i$, the vector describing the task we are currently interested in, and the 
closest $\w_j$ for which we have computed a policy. This formalizes the 
intuition that the agent should perform well in task $\w_i$ if it
has solved a similar task before. 
More generally, the term in question relates the concept of distance
in $\R^d$ with difference in performance in \M. 
Note that this correspondence depends on the specific set of features $\vphi$ 
used, which raises the interesting question of 
how to define $\vphi$ such that tasks that are 
close in $\R^d$ induce policies that are also similar
in some sense. Regardless of how exactly $\vphi$ is defined, 
the bound~(\ref{eq:bound_w}) allows for powerful
extrapolations. For example, by covering the 
relevant subspace of $\R^d$ with balls of 
appropriate radii centered at $\w_j$ 
we can provide performance guarantees for \emph{any} task~$\w$~\cp{mehta2008transfer}.
This corresponds to building a library of \emph{options} (or ``skills'') that 
can be used to solve any task in a (possibly infinite) 
set~\cp{sutton99between}. 
In Section~\ref{sec:experiments} we illustrate this 
concept with experiments.

Although Theorem~\ref{teo:gpi_sf} is inexorably related 
to the characterization of \M\ in~(\ref{eq:M}), it does not depend on 
the definition of SFs in any way. 
Here SFs are 
the \emph{mechanism} used to efficiently apply the protocol suggested by 
Theorem~\ref{teo:gpi_sf}. When SFs are used the 
value function approximations are given by 
$\Qt^{\piexpj}_i(s, a) = \tilde{\vpsi}^{\piexpj}(s,a)^\t \wt_i$.
The modules $\tilde{\vpsi}^{\piexpj}$ 
are computed and stored when the agent is learning the tasks $M_j$;
when faced with a new task $M_i$ the agent computes an approximation
of $\w_i$, which is a supervised learning problem,
and then uses the GPI policy $\pi$ defined in Theorem~\ref{teo:gpi_sf}
to learn $\tilde{\vpsi}^{\piexpi}$.
Note that we do not assume that either 
${\vpsi}^{\piexpj}$ or $\w_i$ is computed exactly:
the effect of errors in $\tilde{\vpsi}^{\piexpj}$ and $\wt_i$
are accounted for 
by the term $\epsilon$ appearing in~(\ref{eq:ub_epsilon}).
As shown in~(\ref{eq:bound_w}), 
if $\epsilon$ is small and the agent has seen enough tasks 
the performance of $\pi$ on $M_i$ should already be good, 
which suggests it may also speed up 
the process of learning $\tilde{\vpsi}^{\piexpi}$.

Interestingly, Theorem~\ref{teo:gpi_sf} also provides guidance 
for some practical algorithmic choices. Since in an actual 
implementation one wants to limit the number of SFs $\tilde{\vpsi}^{\piexpj}$
stored in memory, the corresponding vectors $\wt_j$ can be used to 
decide which ones to keep. For example, one can create a new  
$\tilde{\vpsi}^{\piexpi}$ only when 
$\min_j || \wt_i - \wt_j||$ is above a given threshold;
alternatively, once the maximum number of SFs has been reached, 
one can replace $\tilde{\vpsi}^{\piexpk}$,
where $k = \argmin_j || \wt_i - \wt_j||$ (here $\w_i$ is the 
current task).

\section{Experiments}
\label{sec:experiments}

In this section we present our main experimental results. Additional details, 
along with further results and analysis, can be found 
in Appendix~\ref*{sec:details_experiments} of the supplementary material.

The first environment we consider involves navigation tasks 
defined over a two-dimensional continuous space
composed of four rooms (Figure~\ref{fig:forage_maze}). 
The agent starts in 
one of the rooms and must reach a 
goal region located in the farthest room. 
The environment has objects that can 
be picked up by the agent by passing over them. 
Each object belongs to one of three classes determining the associated reward. 
The objective of the agent is to pick up the ``good'' objects 
and navigate to the goal while avoiding ``bad'' 
objects. 
The rewards associated with object classes change at 
every $20\,000$ transitions, giving rise to very different tasks 
(Figure~\ref{fig:forage_maze}).
The goal is to maximize the sum of rewards accumulated over 
a sequence of $250$ tasks, with each task's rewards sampled uniformly from $[-1, 1]^3$.

\begin{wrapfigure}{r}{0.58\textwidth}
  \hspace{2mm}
  \includegraphics[scale=0.75]{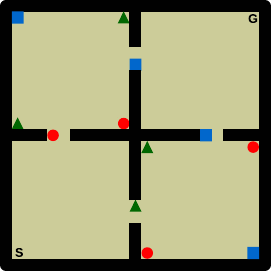}
  \hspace{2mm}
  \includegraphics[scale=0.75]{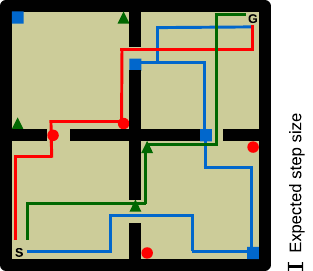}
\caption{Environment layout and some examples of optimal trajectories 
associated with specific tasks.
The shapes of the objects represent their classes; `S' is the start state and 
`G' is the goal. 
\label{fig:forage_maze}
 \vspace{-5mm}
}
\end{wrapfigure}

We defined a straightforward instantiation of our 
approach in which both \wt\ and $\tilde{\vpsi}^{\pi}$ are computed
incrementally in order to minimize 
losses induced by~(\ref{eq:reward}) and~(\ref{eq:bellman_psi}). 
Every time the task changes 
the current $\tilde{\vpsi}^{\pi_i}$ is 
stored and a new $\tilde{\vpsi}^{\pi_{i+1}}$ is created. 
We call this method SFQL as a 
reference to the 
fact that SFs are learned through an algorithm 
analogous to $Q$-learning (QL)---which is used 
as a baseline in our comparisons~\cp{watkins92qlearning}. 
As a more challenging reference point we report results for 
a transfer method called \emph{probabilistic policy 
reuse}~\cp{fernandez2010probabilistic}. We adopt a version of 
the algorithm that builds on QL and reuses all policies 
learned. The resulting method, PRQL, 
is thus directly comparable to SFQL. The details of  QL, PRQL, and SFQL, 
including their pseudo-codes, are given in 
Appendix~\ref*{sec:details_experiments}.

We compared two versions of SFQL. In the first one, 
called SFQL-\vphi, we assume the agent has access to 
features \vphi\ that perfectly predict the 
rewards, as in~(\ref{eq:reward}). 
The second version of our agent had to learn an approximation 
$\tvphi \in \R^h$ directly from data collected by QL 
in the first $20$ tasks. 
Note that $h$ may not coincide with 
the true dimension of $\vphi$, which in this case is $4$;
we refer to the different instances of our algorithm as SFQL-$h$.
The process 
of learning \tvphi\ followed the 
multi-task learning protocol proposed by
\ct{caruana97multitask} and \ct{baxter2000model}, and
described in detail in 
Appendix~\ref*{sec:details_experiments}. 

The results of our experiments can be seen 
in Figure~\ref{fig:results_four_room}. As shown, 
all versions of SFQL significantly outperform the other two methods,
with an improvement on the average return of more than $100\%$ when compared 
to PRQL, which itself improves on QL by around $100\%$. 
Interestingly, SFQL-$h$ seems to achieve good overall performance 
\emph{faster} than SFQL-$\phi$, even though the latter uses 
features that allow for an exact representation of the rewards. One possible 
explanation is that, unlike their counterparts $\phi_i$, 
the features $\tilde{\phi}_i$ are activated over most of the space 
$\S \times \A \times \S$, which results in a dense pseudo-reward signal that facilitates learning.

\begin{figure}[b]
\includegraphics[width=\textwidth,height=50mm]{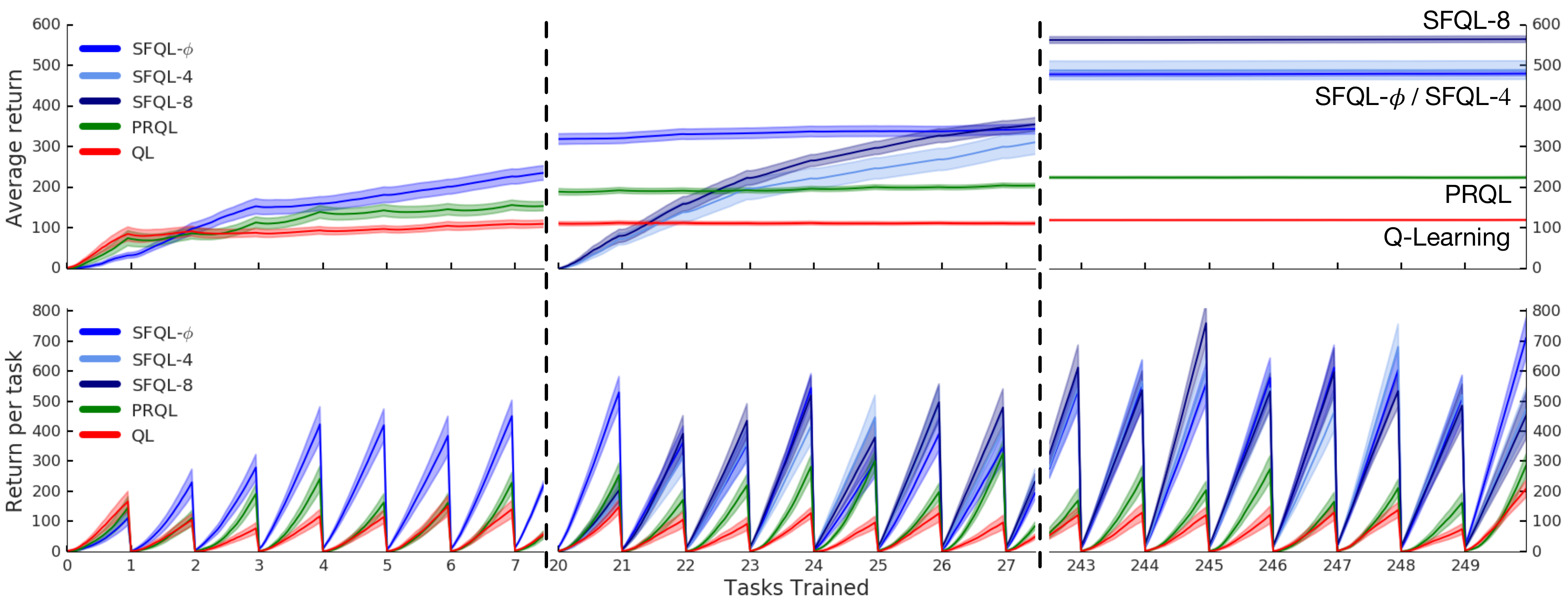} 
\caption{Average and cumulative return per task in the four-room domain.
SFQL-$h$ receives no reward during the first $20$ tasks 
while learning $\tvphi$. 
Error-bands show one standard error over $30$ runs.
\label{fig:results_four_room}
}
\end{figure}

The second environment we consider is a set of control tasks defined in the MuJoCo 
physics engine~\cp{todorov2012mujoco}. 
Each task consists in moving a two-joint torque-controlled 
simulated robotic arm to a specific target location;  
thus, we refer to this environment as ``the reacher domain.''
We defined $12$ tasks, but only allowed 
the agents to train in $4$ of them (Figure~\ref{fig:results_reacher}c). 
This means that the agent must be able to perform well 
on tasks that it has never experienced during training. 

In order to solve this problem, we adopted essentially the same algorithm as 
above, but we 
replaced QL with \ca{mnih2015human}'s DQN---both as a baseline and as the basic 
engine underlying the SF agent~\cp{mnih2015human}. The resulting method, which 
we call SFDQN, is an illustration of how our method can be naturally combined 
with complex nonlinear approximators such as neural 
networks. The features $\phi_i$ used by SFDQN are the 
negation of the distances to the center of the $12$ 
target regions. As usual in experiments of this type,  
we give the agents a description of the 
current task: for DQN the target coordinates are given 
as inputs, while for SFDQN
this is provided as an one-hot 
vector $\w_t \in \R^{12}$~\cp{lillicrap2015continuous}. 
Unlike in the previous experiment, in the current setup each transition 
was used to train all four $\tilde{\vpsi}^{\pi_i}$ through
losses derived from~(\ref{eq:bellman_psi}). Here
$\pi_i$ is the GPI policy on the \ith\ task:
$\pi_i(s) \in \argmax_a \max_{j} \tvpsi_j(s,a)^\t \w_i$.

Results are shown in Figures~\ref{fig:results_reacher}a 
and~\ref{fig:results_reacher}b.
Looking at the training curves, we see 
that whenever a task is selected for training SFDQN's return 
on that task quickly improves 
and saturates at near-optimal performance.
The interesting point to be noted is that, when learning a given task, 
SFDQN's performance 
also improves in all other tasks, including the test ones, for which 
it does not have specialized policies.
This illustrates how the combination of SFs and GPI can give rise 
to flexible agents able to perform 
well in \emph{any} task of a set of tasks with shared 
dynamics---which in turn can be seen as 
both a form of temporal abstraction and a
step towards more general hierarchical RL~\cp{sutton99between,barto2003recent}. 

\begin{figure}[t]
\centering
\begin{subfloat}[
Performance on training tasks 
(faded dotted lines in the background are DQN's results).]{
{
\begin{minipage}{0.5\textwidth}
\label{fig:total_return}
\includegraphics[scale=0.28]{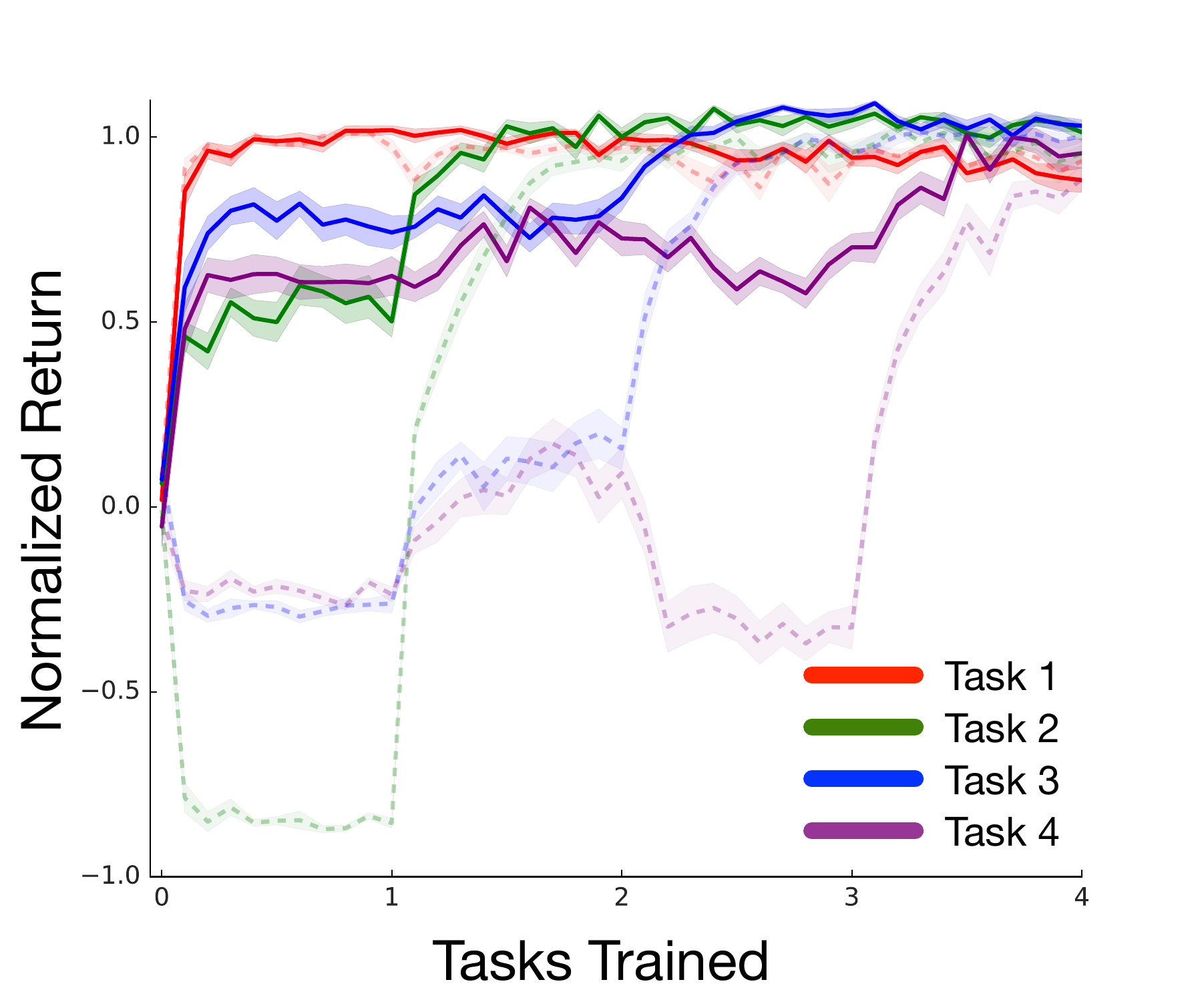} 
\end{minipage}
}
\hspace{5mm}
}
\end{subfloat}
\begin{subfloat}
{
\begin{tabular}{c}
 \includegraphics[scale=0.15]{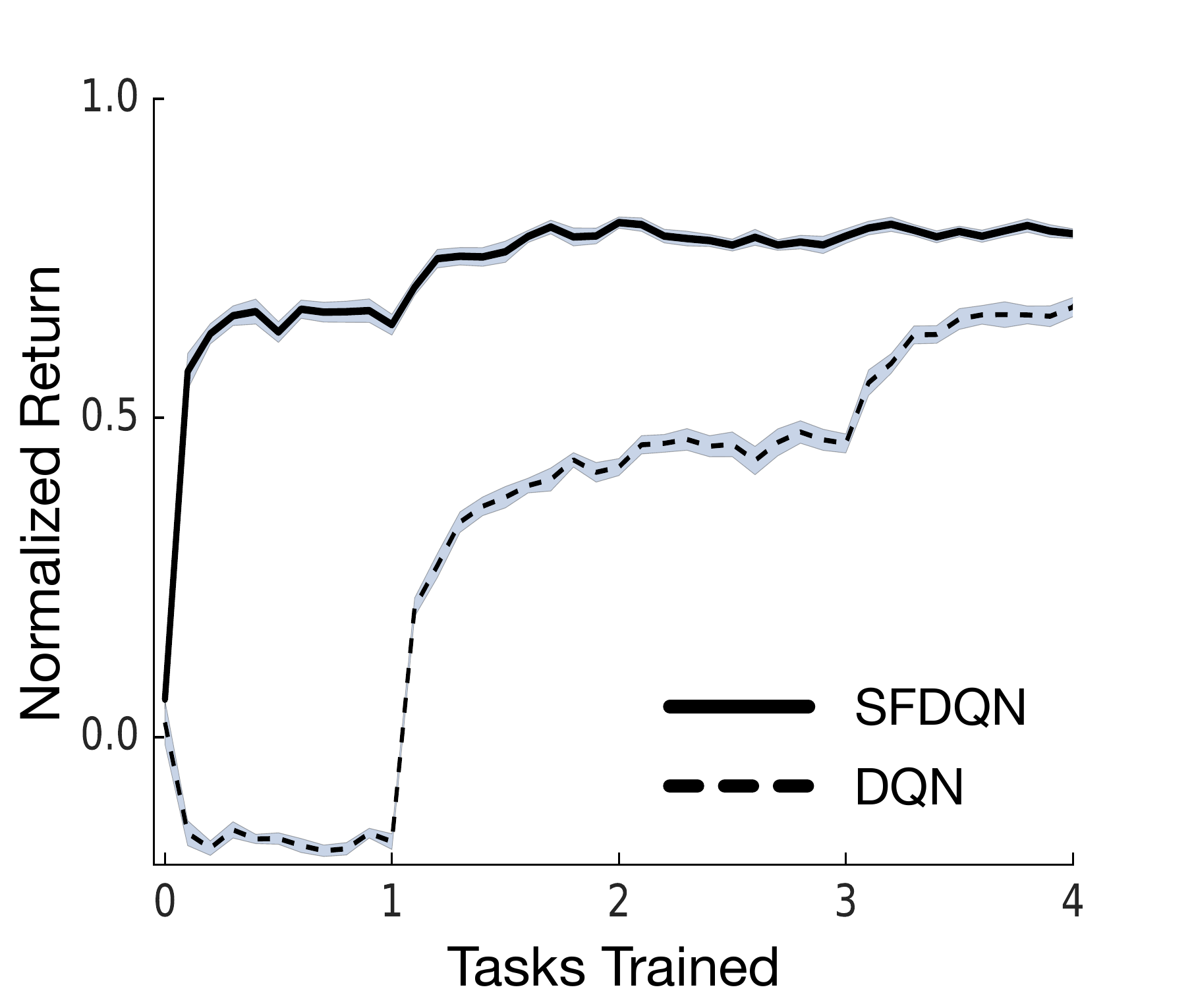} \\
 {\small (b) Average performance on test tasks. }\\ \\
  \includegraphics[scale=0.15]{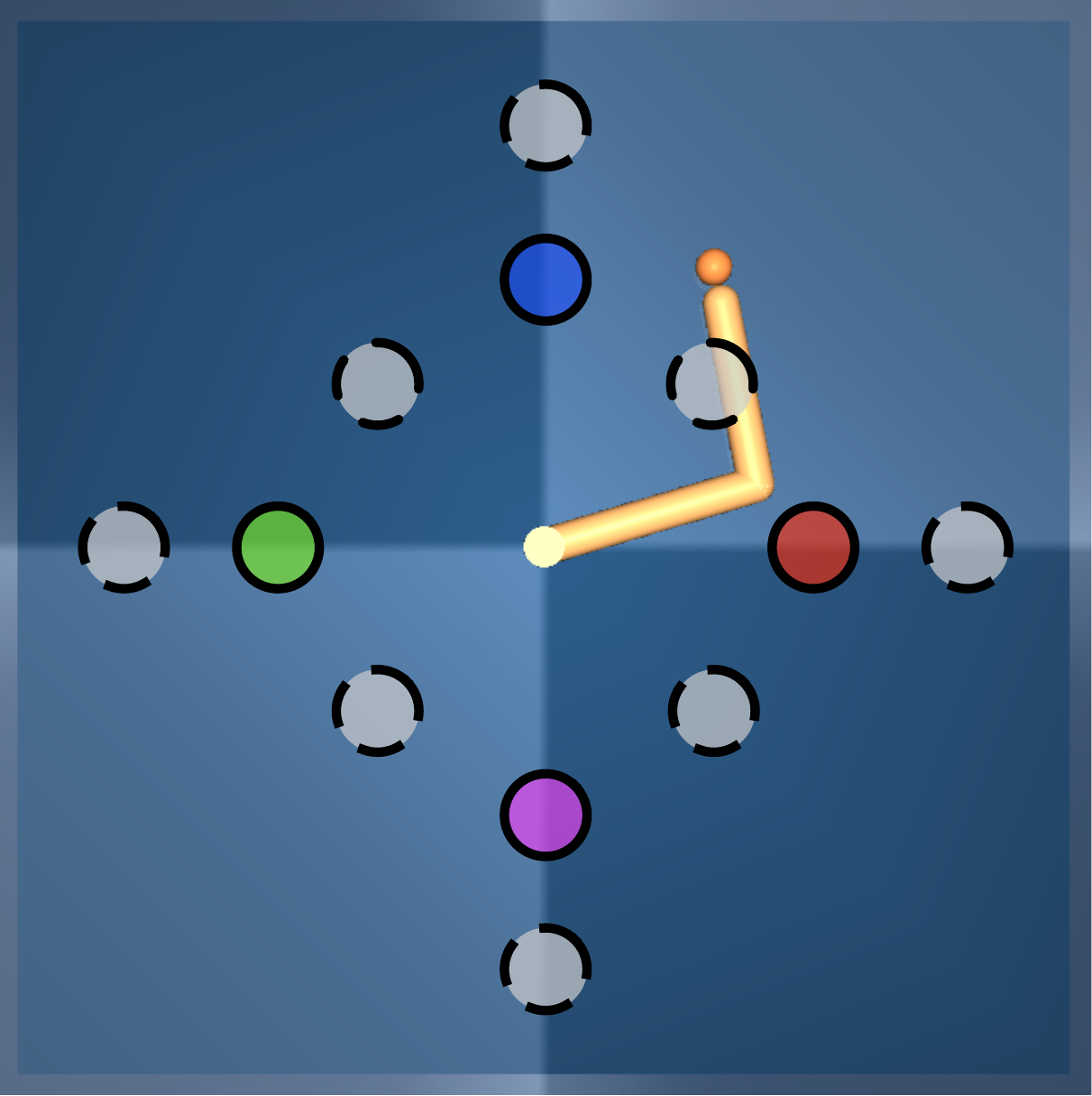} \\ 
  {\small \hspace{1.5mm} (c) \hspace{0.3mm} Colored and gray circles depict } \\
  { \small training and test targets, respectively. } 
\end{tabular}
}
\end{subfloat}
\caption{Normalized return on the reacher domain: `$1$' corresponds to the average result achieved by DQN after learning each task separately and `$0$' corresponds to the average performance of a
randomly-initialized agent (see Appendix~\ref*{sec:details_experiments} for details).
SFDQN's results were obtained using the GPI 
policies $\pi_i(s)$ defined in the text.
Shading shows one standard error over $30$ runs. 
\label{fig:results_reacher}}
\end{figure}

\section{Related work}
\label{sec:related_work}

\ctp{mehta2008transfer} approach for transfer learning is probably
the closest work to ours in the literature. 
There are important differences, though. First, \ct{mehta2008transfer}
assume that both \vphi\ and \w\ 
are always observable quantities provided by the environment.
They also focus on average reward RL, in which 
the quality of a decision policy can be characterized by a single scalar. 
This reduces the process of selecting a policy for a task 
to one decision made at the outset, which is in clear contrast with 
GPI.

The literature on transfer learning has other methods that relate to 
ours~\cp{taylor2009transfer,lazaric2012transfer}.
Among the algorithms designed for the scenario considered here, two approaches 
are particularly relevant because they also 
reuse old policies. One is \ctp{fernandez2010probabilistic} 
probabilistic policy reuse, adopted in our experiments 
and described in Appendix~\ref*{sec:details_experiments}. 
The other approach, by~\ct{bernstein99reusing}, corresponds to using our 
method but relearning all $\tvpsi^{\pi_i}$ from scratch at each new task.

When we look at SFs strictly as a representation scheme, there are clear 
similarities with \ctp{littman2001predictive} predictive state representation 
(PSR). Unlike SFs, though, PSR tries to summarize the dynamics of the entire 
environment rather than of a single policy $\pi$. A scheme that is perhaps
closer to SFs is the value function representation 
sometimes adopted in inverse RL~\cp{ng2000algorithms}. 

SFs are also related to \ctp{sutton2011horde} \emph{general 
value functions} (GVFs), which extend the notion of value function to also 
include ``pseudo-rewards.'' If we see $\phi_i$ as a pseudo-reward, 
$\psi^{\pi}_i(s,a)$ becomes a particular case of GVF.
Beyond the technical similarities, the connection between SFs and GVFs 
uncovers some principles underlying both lines of work that, when contrasted, 
may benefit both. On one hand, \ctp{sutton2011horde} and 
\ctp{modayil2014multi} hypothesis that relevant 
knowledge about the world can be expressed in the form of many 
predictions naturally translates to SFs:
if $\vphi$ is expressive enough, the agent should be able to represent 
\emph{any} relevant reward function. 
Conversely, SFs not only provide a concrete 
way of using this knowledge, they also suggest a possible criterion to
select the pseudo-rewards $\phi_i$, since
ultimately we are only interested in features that help in the 
approximation $\vphi(s,a,s')^{\top}\wt \approx r(s,a,s')$.
    
Another generalization of value functions that is related to SFs 
is \ctp{schaul2015universal} \emph{universal value function 
approximators} (UVFAs).
UVFAs extend the notion of value function to also include as an 
argument an abstract representation of a ``goal,''
which makes them particularly suitable for transfer.
The function 
$\max_j \tilde{\vpsi}^{\piexpj}(s,a)^\t \wt$ 
used in our framework can be seen as 
a function of $s$, $a$, and \wt---the latter a generic way 
of representing a goal---, and thus in some sense 
this representation \emph{is} a UVFA. 
The connection between SFs and UVFAs raises an interesting point: since under
this interpretation \wt\ is simply the description of a task,
it can in principle be a direct 
function of the observations, which opens up the possibility
of the agent determining \wt\ even \emph{before} seeing 
any rewards.

As discussed, our approach is also related to temporal abstraction and 
hierarchical RL: if we look at $\vpsi^\pi$ as instances of \ctp{sutton99between} 
\emph{options}, acting greedily with respect to the maximum
over their value functions corresponds in some sense to planning at a 
higher level of temporal abstraction 
(that is, each $\vpsi^\pi(s,a)$ is associated with an option 
that terminates after a single step). This is the view adopted by 
\ct{yao2014universal}, whose \emph{universal option model} closely resembles 
our approach in some aspects (the main difference being that they do not do 
GPI).

Finally, there have been previous attempts to combine SR and neural 
networks. \ct{kulkarni2016deep} and \ct{zhang2016deep}
propose similar architectures to jointly learn 
$\tvpsi^\pi(s,a)$, $\tvphi(s,a,s')$ and \wt.
Although neither work exploits SFs for GPI, they both discuss
other uses of SFs for transfer. In principle the proposed (or similar) 
architectures can also be used within our framework.

\section{Conclusion}
\label{sec:conclusion}

This paper builds on two concepts, both of which are generalizations of 
previous 
ideas.
The first one is SFs, a generalization of \ctp{dayan93improving}
SR that extends the original definition from discrete 
to continuous spaces and also facilitates 
the use of function approximation.  
The second concept is GPI, formalized in 
Theorem~\ref{teo:gpi}. As the name suggests, this result extends 
\ctp{bellman57dynamic} classic policy improvement theorem
from a single to multiple policies.

Although SFs and GPI 
are of interest on their own, in this paper we focus on their combination 
to induce transfer. The resulting framework is an elegant
extension of DP's basic setting that provides a 
solid foundation for transfer in RL.
As a complement to the proposed transfer approach, 
we derived a theoretical result, 
Theorem~\ref{teo:gpi_sf}, that 
formalizes the intuition that 
an agent should perform well on a novel task if it has seen a similar task
before. We also illustrated with a 
comprehensive set of experiments how the 
combination of SFs and 
GPI promotes transfer in practice.

We believe the proposed ideas lay out a general framework for
transfer in RL. By specializing the basic components
presented one can build on our 
results to derive agents able to perform well across 
a wide variety of tasks, and thus extend the range of environments that
can be successfully tackled.

\newpage

\section*{Acknowledgments}

The authors would like to thank Joseph Modayil 
for the invaluable discussions during the development of the ideas
described in this paper.
We also thank Peter Dayan, Matt Botvinick, Marc Bellemare, and Guy Lever
for the excellent comments, and Dan Horgan and Alexander 
Pritzel for their help with the experiments. Finally, we thank
the anonymous reviewers for their comments and suggestions to 
improve the paper.

\newpage

\appendix

\vspace{7mm} 
\begin{center}
\vspace{7mm} 
\noindent\makebox[\textwidth]{\rule{\textwidth}{2.0pt}} \\
\vspace{3mm} 
{\bf {\LARGE  Successor Features for \vspace{2mm} \\ Transfer in Reinforcement 
Learning} \\
\vspace{2mm} {\Large Supplementary Material} }
\vspace{5mm} 
\noindent\makebox[\textwidth]{\rule{\textwidth}{1.0pt}}

  {\bf Andr\'e Barreto}, \hspace{0.5mm}
  {\bf Will Dabney}, \hspace{0.5mm}
  {\bf R\'{e}mi Munos}, \hspace{0.5mm}
  {\bf Jonathan J. Hunt}, \hspace{0.5mm} \\
  {\bf Tom Schaul}, \hspace{0.5mm}
  {\bf Hado van Hasselt}, \hspace{0.5mm}
  {\bf David Silver} \vspace{1mm} \\ 
   \texttt{\small \{andrebarreto,wdabney,munos,jjhunt,schaul,hado,davidsilver\}@google.com} \vspace{1.5mm} \\
  DeepMind 

\end{center}

\maketitle

\begin{abstract}
In this supplement we give details of the theory and experiments that had to 
be left out of the main paper due to the space limit. For the convenience of 
the reader the statements of the theoretical results are reproduced before 
the respective proofs. We also provide a thorough description of the protocol
used to carry out our experiments, present details of the algorithms, including
their pseudo-code, and report additional empirical analysis 
that could not be included in the paper.
The numbering of sections, equations, and figures resume that used in the main 
paper, so we refer to these elements as if paper and supplement were a single 
document. We also cite references listed in the main paper.
\end{abstract}

\section{Proofs of theoretical results}
\label{sec:proofs}

\setcounter{theorem}{0}

\begin{theorem}
{\bf (Generalized Policy Improvement)} Let $\pi_1$, $\pi_2$, ..., $\pi_n$ be 
$n$ 
decision policies and 
let $\Qt^{\pi_1}$, $\Qt^{\pi_2}$, ..., $\Qt^{\pi_n}$ be approximations of their 
respective action-value functions
such that 
\begin{equation*}
\label{eq:epsilon2}
|Q^{\pi_i}(s,a) - \Qt^{\pi_i}(s,a)| \le \epsilon \, \text{ for all } s \in S, a 
\in A, \text{ and }  i \in \{1, 2, ..., n\}.
\end{equation*}
 Define
\begin{equation*}
\label{eq:pitmax2}
\pitmax(s) \in \mathop{\argmax}_a \max_i \Qt^{\pi_i}(s,a).
\end{equation*}
Then, 
\begin{equation*}
\label{eq:Qpitmax2}
\Qpitmax(s,a)  \ge \max_i Q^{\pi_i}(s,a) - \dfrac{2}{1 - \gamma} \epsilon
\end{equation*}
for any $s \in S$ and any $a \in A$,
where \Qpitmax\ is the action-value function of \pitmax.
\end{theorem}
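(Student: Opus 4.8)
The plan is to reduce this multi-policy statement to a single application of the monotonicity and contraction properties of a policy's Bellman operator, mirroring the classical single-policy proof. First I would introduce the pointwise maxima $\Qmax(s,a) \equiv \max_i Q^{\pi_i}(s,a)$ and $\Qtmax(s,a) \equiv \max_i \Qt^{\pi_i}(s,a)$. Since the maximum of a family of functions each within $\epsilon$ of a common target is itself within $\epsilon$ of the maximum of those targets, the hypothesis~(\ref{eq:epsilon}) gives $|\Qmax(s,a) - \Qtmax(s,a)| \le \epsilon$ for all $s \in \S$ and $a \in \A$. By construction $\pitmax(s)$ is greedy with respect to $\Qtmax(s,\cdot)$, so $\Qtmax(s,\pitmax(s)) = \max_a \Qtmax(s,a)$, a fact used repeatedly below.

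The heart of the argument is a one-step inequality relating $\Qmax$ to the Bellman operator of $\pitmax$, $(\Tpitmax Q)(s,a) = r(s,a) + \gamma\sum_{s'} p(s'|s,a)\, Q(s',\pitmax(s'))$. Fix an index $i$ and a pair $(s,a)$ and start from the Bellman equation $Q^{\pi_i}(s,a) = r(s,a) + \gamma\sum_{s'} p(s'|s,a)\, Q^{\pi_i}(s',\pi_i(s'))$. I would bound the next-state term by the chain $Q^{\pi_i}(s',\pi_i(s')) \le \Qt^{\pi_i}(s',\pi_i(s')) + \epsilon \le \max_a \Qt^{\pi_i}(s',a) + \epsilon \le \Qtmax(s',\pitmax(s')) + \epsilon \le \Qmax(s',\pitmax(s')) + 2\epsilon$, where the first and last steps invoke the $\epsilon$-approximation bounds, and the two middle steps use that $\Qt^{\pi_i} \le \Qtmax$ pointwise together with the greediness of $\pitmax$ with respect to $\Qtmax$. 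Plugging this back into the Bellman equation and then taking the maximum over $i$ on the left-hand side yields the pointwise inequality $\Qmax(s,a) \le (\Tpitmax\Qmax)(s,a) + 2\gamma\epsilon$. This chaining step --- and in particular keeping track of the two separate $\epsilon$-conversions that together produce the factor of $2$ --- is the only genuinely delicate part of the argument.

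Finally I would iterate this inequality. Since $\Tpitmax$ is monotone and adding a constant $c$ to its argument adds $\gamma c$ to its output, applying $\Tpitmax$ repeatedly gives $\Qmax(s,a) \le \big((\Tpitmax)^{k}\Qmax\big)(s,a) + 2\epsilon\sum_{j=1}^{k}\gamma^{j}$ for every $k \ge 1$. Letting $k \to \infty$ and using that $\Tpitmax$ is a $\gamma$-contraction whose unique fixed point is $\Qpitmax$, this becomes $\Qmax(s,a) \le \Qpitmax(s,a) + \tfrac{2\gamma}{1-\gamma}\epsilon$; rearranging, and noting $\tfrac{2\gamma}{1-\gamma} \le \tfrac{2}{1-\gamma}$, gives $\Qpitmax(s,a) \ge \max_i Q^{\pi_i}(s,a) - \tfrac{2}{1-\gamma}\epsilon$ for all $s \in \S$ and $a \in \A$, as claimed. (The finiteness of $\S$ and $\A$ is used only to write the Bellman operator with sums; replacing the sums by expectations with respect to $p(\cdot|s,a)$ makes the same argument go through without change.)
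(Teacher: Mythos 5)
Your proposal is correct and follows essentially the same route as the paper: define the pointwise maxima $\Qmax$ and $\Qtmax$, note $|\Qmax-\Qtmax|\le\epsilon$, establish a one-step Bellman inequality via the greediness of $\pitmax$ with respect to $\Qtmax$, and conclude by iterating with the monotonicity, constant-shift, and contraction properties of $\Tpitmax$. The only (harmless) difference is that you apply the operator to $\Qmax$ and iterate from it, whereas the paper iterates from $\Qtmax$ and converts to $\Qmax$ at the end; your bookkeeping in fact yields the slightly sharper constant $\tfrac{2\gamma}{1-\gamma}\epsilon$, from which the stated bound follows immediately.
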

\begin{proof}
To simplify the notation, let
\begin{equation*}
\label{eq:qtmax}
\begin{array}{ccc}
\Qmax(s,a) = \max_i Q^{\pi_i}(s,a)
& \text{ and }
& \Qtmax(s,a) = \max_i \Qt^{\pi_i}(s,a).
\end{array}
\end{equation*}
We start by noting that for any $s \in S$ and any $a \in A$ the following holds:
\begin{align*}
|\Qmax(s,a) - \Qtmax(s,a)| 
 =  |\max_i Q^{\pi_i} (s,a) - \max_i \Qt^{\pi_i}(s,a)| 
 \le  \max_i |Q^{\pi_i} (s,a) - \Qt^{\pi_i}(s,a)| 
 \le \epsilon.
\end{align*}
For all $s \in S$, $a \in A$, and $i \in \{1,2,...,n\}$ we have
\begin{align*}
\Tpitmax \Qtmax(s,a) 
& = r(s,a) + \gamma \sum_{s'} p(s' | s,a) \Qtmax(s', \pitmax(s')) \\ 
& = r(s,a) + \gamma \sum_{s'} p(s' | s,a) \max_b \Qtmax(s',b) \\
& \ge r(s,a) + \gamma \sum_{s'} p(s' | s,a)  \max_b \Qmax(s',b) - \gamma 
\epsilon \\
& \ge r(s,a) + \gamma \sum_{s'} p(s' | s,a)  \Qmax(s',\pi_i(s')) - \gamma 
\epsilon \\
& \ge r(s,a) + \gamma \sum_{s'} p(s' | s,a)  Q^{\pi_i}(s',\pi_i(s')) - \gamma 
\epsilon\\
& = T^{\pi_i} Q^{\pi_i}(s,a) - \gamma \epsilon \\
& = Q^{\pi_i}(s,a) -\gamma \epsilon. \\
\end{align*}
Since $\Tpitmax \Qtmax(s,a) \ge Q^{\pi_i}(s,a) - \gamma \epsilon$ for any $i$, 
it must be the case that 
\begin{align*}
\Tpitmax \Qtmax(s,a) 
& \ge \max_i Q^{\pi_i}(s,a) - \gamma \epsilon \\
& = \Qmax(s,a) - \gamma \epsilon \\
& \ge \Qtmax(s,a) - \epsilon - \gamma \epsilon.
\end{align*}
Let $e(s,a) = 1$ for all $s,a \in S \times A$.
It is well known that $\Tpitmax (\Qtmax(s,a) +c e(s,a)) = \Tpitmax \Qtmax(s,a) 
+ 
\gamma c$ for any $c \in \R$. 
Using this fact together with the monotonicity and contraction properties of 
the 
Bellman operator \Tpitmax,
we have 
\begin{align*}
\Qpitmax(s,a)
&  = \lim_{k \rightarrow \infty} (\Tpitmax)^k \Qtmax(s,a) \\ 
&  \ge \Qtmax(s,a) - \dfrac{1 + \gamma}{1 - \gamma} \epsilon \\
&  \ge \Qmax(s,a) - \epsilon - \dfrac{1 + \gamma}{1 - \gamma} \epsilon.
\end{align*}
\end{proof}

\begin{lemma}
\label{teo:bound_pair}
Let $\rdif_{ij} = \max_{s,a} \left|r_i(s,a) - r_j(s,a)\right|$. Then,
\begin{equation*}
\label{eq:bound}
Q_{i}^{\pi^*_i}(s,a) - Q^{\pi_j^*}_i(s,a) \le \dfrac{2  \rdif_{ij}}{1- \gamma}.
\end{equation*}
\end{lemma}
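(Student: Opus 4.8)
The plan is to prove the lemma via a three–term telescoping decomposition, combining a standard ``reward-perturbation'' bound with the optimality of $\pi_j^*$ in $M_j$.

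First I would establish an auxiliary fact: for \emph{any fixed} policy $\pi$ and any two tasks $M_i, M_j \in \M$,
$$\left| Q^{\pi}_i(s,a) - Q^{\pi}_j(s,a) \right| \le \frac{\rdif_{ij}}{1-\gamma} \quad\text{for all } s \in \S,\ a \in \A.$$
This holds because $Q^\pi_i$ and $Q^\pi_j$ are the fixed points of the Bellman operators $T^\pi_i$ and $T^\pi_j$, which share the same transition kernel $p(\cdot|s,a)$ and the same policy $\pi$, and differ only in their reward functions, whose pointwise gap is at most $\rdif_{ij}$ by definition of $\rdif_{ij}$. Concretely, since $\max_{s,a}|T^\pi_i Q(s,a) - T^\pi_j Q(s,a)| \le \rdif_{ij}$ for every $Q$, one can iterate the contraction exactly as in the proof of the Generalized Policy Improvement theorem above; equivalently, writing $Q^\pi_i - Q^\pi_j = \sum_{t=0}^{\infty} \gamma^t (\Ppi)^t (\mat{r}_i^\pi - \mat{r}_j^\pi)$ and bounding each term by $\gamma^t \rdif_{ij}$ gives the geometric series $\rdif_{ij}/(1-\gamma)$.

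Next I would decompose the quantity of interest as
$$Q^{\pi_i^*}_i(s,a) - Q^{\pi_j^*}_i(s,a) = \underbrace{\big(Q^{\pi_i^*}_i(s,a) - Q^{\pi_i^*}_j(s,a)\big)}_{(\mathrm{A})} + \underbrace{\big(Q^{\pi_i^*}_j(s,a) - Q^{\pi_j^*}_j(s,a)\big)}_{(\mathrm{B})} + \underbrace{\big(Q^{\pi_j^*}_j(s,a) - Q^{\pi_j^*}_i(s,a)\big)}_{(\mathrm{C})}.$$
Terms $(\mathrm{A})$ and $(\mathrm{C})$ each compare the value of a single fixed policy ($\pi_i^*$ and $\pi_j^*$, respectively) across $M_i$ and $M_j$, so by the auxiliary fact each is at most $\rdif_{ij}/(1-\gamma)$. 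Term $(\mathrm{B})$ compares $\pi_i^*$ against $\pi_j^*$ \emph{within the same MDP $M_j$}; since $\pi_j^*$ is optimal for $M_j$ we have $Q^{\pi_j^*}_j(s,a) \ge Q^{\pi_i^*}_j(s,a)$, hence $(\mathrm{B}) \le 0$. Adding the three bounds yields $Q^{\pi_i^*}_i(s,a) - Q^{\pi_j^*}_i(s,a) \le 2\rdif_{ij}/(1-\gamma)$, as claimed.

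I do not anticipate a serious obstacle: the only point requiring care is the bookkeeping of \emph{which} value function belongs to \emph{which} (policy, MDP) pair, so that the telescoping is valid and the optimality inequality is applied in the MDP where the policy is actually optimal. The reward-perturbation bound of the first step is the single genuinely quantitative ingredient, and it is routine.
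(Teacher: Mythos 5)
Your proof is correct, and it takes a somewhat different route from the paper's. The paper uses a two-term split, $Q^{\pi_i^*}_i - Q^{\pi_j^*}_i = \bigl(Q^{\pi_i^*}_i - Q^{\pi_j^*}_j\bigr) + \bigl(Q^{\pi_j^*}_j - Q^{\pi_j^*}_i\bigr)$, and bounds each absolute difference by $\rdif_{ij}/(1-\gamma)$: the first is a comparison of the \emph{optimal} value functions of two MDPs sharing the same dynamics (handled Strehl--Littman style, pushing the perturbation through the Bellman optimality equation via $|\max_b Q_i - \max_b Q_j| \le \max_b |Q_i - Q_j|$), while the second is exactly your fixed-policy reward-perturbation bound applied to $\pi_j^*$. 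You instead telescope through $Q^{\pi_i^*}_j$, so that both perturbation terms (A) and (C) are of the same fixed-policy type and the optimality of $\pi_j^*$ in $M_j$ enters only through the sign of term (B); in effect you re-derive the paper's first bound as $\bigl(Q^{\pi_i^*}_i - Q^{\pi_i^*}_j\bigr) + \bigl(Q^{\pi_i^*}_j - Q^{\pi_j^*}_j\bigr) \le \rdif_{ij}/(1-\gamma) + 0$. Your version is slightly more elementary (it never needs to compare two Bellman \emph{optimality} fixed points, only evaluation fixed points), at the cost of producing a one-sided bound on that intermediate piece, whereas the paper's intermediate step gives the two-sided statement $|Q^{\pi_i^*}_i - Q^{\pi_j^*}_j| \le \rdif_{ij}/(1-\gamma)$, a marginally stronger standalone fact. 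The only bookkeeping nit: your series expression $\sum_{t\ge 0}\gamma^t (\Ppi)^t(\mat{r}^\pi_i - \mat{r}^\pi_j)$ is written for state-value functions; for the action-value case the first term is $r_i(s,a)-r_j(s,a)$ with the policy entering only from the next step onward, but each term is still bounded by $\gamma^t\rdif_{ij}$, so the conclusion is unaffected (the fixed-point/contraction phrasing you also give covers it directly).
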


\begin{proof}
To simplify the notation, let $Q^j_i (s,a) \equiv Q_{i}^{\pi^*_j}(s,a)$.
Then,
\begin{align}
\label{eq:triangle}
\nonumber Q_i^i(s,a) - Q^j_i(s,a) 
& = Q_i^i(s,a) - Q^j_j(s,a) + Q_j^j(s,a) - Q^j_i(s,a) \\
& \le |Q_i^i(s,a) - Q^j_j(s,a)| + |Q_j^j(s,a) - Q^j_i(s,a)|. 
\end{align}
Our strategy will be to bound $|Q_i^i(s,a) - Q^j_j(s,a)|$ and 
$|Q_j^j(s,a) - Q^j_i(s,a)|$. 
Note that 
$|Q_i^i(s,a) - Q^j_j(s,a)|$ is the difference between the value functions of 
two 
MDPs with the same transition function
but potentially different rewards. 
Let $\Delta_{ij} = \max_{s,a} |Q_i^i(s,a) - Q^j_j(s,a)|$.
Then, 
\footnote{We follow the steps of \ct{strehl2005atheoretical}.}
{\small
\begin{align}
\label{eq:der_sub_bound1}
|Q_i^i(s,a) - Q^j_j(s,a)|
\nonumber & = \left|r_i(s,a) + \gamma \sum_{s'} p(s'|s,a) \max_b Q^i_i(s',b) 
- r_j(s,a) - \gamma \sum_{s'} p(s'|s,a) \max_b Q^j_j(s',b) \right| \\
\nonumber & = \left|r_i(s,a) - r_j(s,a) + \gamma \sum_{s'} p(s'|s,a) 
\left(\max_b Q^i_i(s',b) 
- \max_b Q^j_j(s',b) \right) \right| \\
\nonumber & \le \left|r_i(s,a) - r_j(s,a)\right| + \gamma \sum_{s'} p(s'|s,a) 
\left| 
\max_b Q^i_i(s',b) - \max_b Q^j_j(s',b) \right| \\
\nonumber & \le \left|r_i(s,a) - r_j(s,a)\right| + \gamma \sum_{s'} p(s'|s,a) 
\max_b \left| Q^i_i(s',b) - Q^j_j(s',b) \right| \\
& \le \rdif_{ij} + \gamma \Delta_{ij}. 
\end{align}
}
Since~(\ref{eq:der_sub_bound1}) is valid for any $s,a \in S \times A$, we have 
shown that 
$\Delta_{ij} \le \rdif_{ij} + \gamma \Delta_{ij}$.
Solving for $\Delta_{ij}$ we get 
\begin{equation}
\label{eq:sub_bound1}
\Delta_{ij} \le \dfrac{1}{1- \gamma} \rdif_{ij}.
\end{equation}
We now turn our attention to $|Q_j^j(s,a) - Q^j_i(s,a)|$. Following the 
previous 
steps, 
define $\Delta'_{ij} = \max_{s,a} |Q_i^i(s,a) - Q^j_i(s,a)|$.
Then,
\begin{align*}
\label{eq:der_sub_bound2}
|Q_j^j(s,a) - Q^j_i(s,a)|
\nonumber & = \left|r_j(s,a) + \gamma \sum_{s'} p(s'|s,a) Q^j_j(s',\pi_j^*(s')) 
- r_i(s,a) - \gamma \sum_{s'} p(s'|s,a) Q^j_i(s',\pi_j^*(s')) \right| \\
\nonumber & = \left|r_i(s,a) - r_j(s,a) + \gamma \sum_{s'} p(s'|s,a) 
\left(Q^j_j(s',\pi_j^*(s')) - Q^j_i(s', \pi_j^* (s')) \right) \right| \\
\nonumber & \le \left|r_i(s,a) - r_j(s,a)\right| + \gamma \sum_{s'} p(s'|s,a) 
\left| 
 Q^j_j(s',\pi_j^*(s')) - Q^j_i(s',\pi_j^*(s')) \right| \\
& \le \rdif_{ij} + \gamma \Delta'_{ij}. 
\end{align*}
Solving for $\Delta'_{ij}$, as above, we get
\begin{equation}
\label{eq:sub_bound2}
\Delta'_{ij} \le \dfrac{1}{1- \gamma} \rdif_{ij}.
\end{equation}
Plugging~(\ref{eq:sub_bound1}) and~(\ref{eq:sub_bound2})
back in~(\ref{eq:triangle}) we get the desired result.
\end{proof}

\begin{theorem}
Let $M_i \in \M$ and let $Q^{\pi_j^{*}}_i$ be the value function of an optimal 
policy of $M_j \in \M$ when executed in $M_i$. Given the set 
$\{\Qt^{\pi_1^{*}}_i, \Qt^{\pi_2^{*}}_i, ..., \Qt^{\pi_n^{*}}_i\}$ such that 
\begin{equation*}
 \left|Q^{\pi_j^*}_i(s, a) - \Qt^{\pi_j^{*}}_i(s,a) \right| \le \epsilon  
\, \text{ for all } s \in S, a \in A, \text{ and }  j \in \{1, 2, ..., n\},
\end{equation*}
let 
\begin{equation*}
\pihmax(s) \in \mathop{\argmax}_a \max_j \Qt^{\pi_j^{*}}_i (s,a).
\end{equation*}
Finally, let $\phimax = \max_{s,a} ||\vphi(s,a)||$, 
where $||\cdot||$ is 
the norm induced by the inner product adopted.
Then,
\begin{equation*}
Q^*_i(s,a) - Q^{\pi}_i(s,a) 
\le \dfrac{2 }{1- \gamma}  \left(\phimax \, \min_j || \w_i - \w_j|| + \epsilon 
\right).
\end{equation*}
\end{theorem}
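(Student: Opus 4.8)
The plan is to obtain Theorem~\ref{teo:gpi_sf} as a corollary of Theorem~\ref{teo:gpi} (Generalized Policy Improvement) together with Lemma~\ref{teo:bound_pair}, which bounds the loss of using $\pi_j^*$ instead of $\pi_i^*$ on task $M_i$ in terms of the reward gap $\rdif_{ij}$. The main work is a triangle-inequality argument that combines these two ingredients and then converts the reward gap $\rdif_{ij}$ into the feature-norm expression $\phimax\,\|\w_i - \w_j\|$.

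First I would apply Theorem~\ref{teo:gpi} with the policies $\pi_1^*, \dots, \pi_n^*$ and their approximate value functions $\Qt^{\pi_j^*}_i$ \emph{on task $M_i$}, where the uniform approximation error is $\epsilon$ by hypothesis~(\ref{eq:ub_epsilon}). This yields, for the GPI policy $\pi$,
\begin{equation*}
Q^{\pi}_i(s,a) \ge \max_j Q^{\pi_j^*}_i(s,a) - \frac{2}{1-\gamma}\epsilon,
\end{equation*}
hence in particular $Q^{\pi}_i(s,a) \ge Q^{\pi_j^*}_i(s,a) - \frac{2}{1-\gamma}\epsilon$ for every $j$. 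Subtracting from $Q^{\piexpi}_i(s,a)$ gives
\begin{equation*}
Q^{\piexpi}_i(s,a) - Q^{\pi}_i(s,a) \le \big(Q^{\piexpi}_i(s,a) - Q^{\pi_j^*}_i(s,a)\big) + \frac{2}{1-\gamma}\epsilon
\end{equation*}
for every $j$, so I may take the minimum over $j$ on the right-hand side.

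Next I would invoke Lemma~\ref{teo:bound_pair}, which gives $Q^{\piexpi}_i(s,a) - Q^{\pi_j^*}_i(s,a) \le \frac{2\rdif_{ij}}{1-\gamma}$. It remains to bound $\rdif_{ij} = \max_{s,a}|r_i(s,a) - r_j(s,a)|$ using the structure of $\M$. By~(\ref{eq:reward}), $r_i(s,a) - r_j(s,a) = \vphi(s,a)^\t(\w_i - \w_j)$, so Cauchy--Schwarz (valid for the norm induced by the inner product in use) gives $|r_i(s,a) - r_j(s,a)| \le \|\vphi(s,a)\|\,\|\w_i - \w_j\| \le \phimax\,\|\w_i - \w_j\|$, hence $\rdif_{ij} \le \phimax\,\|\w_i - \w_j\|$. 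Substituting and minimizing over $j$ yields
\begin{equation*}
Q^{\piexpi}_i(s,a) - Q^{\pi}_i(s,a) \le \frac{2}{1-\gamma}\Big(\phimax\,\min_j\|\w_i - \w_j\| + \epsilon\Big),
\end{equation*}
which is the claim.

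I do not anticipate a serious obstacle: each piece is either cited (Theorem~\ref{teo:gpi}, Lemma~\ref{teo:bound_pair}) or elementary (Cauchy--Schwarz, triangle inequality, taking a minimum over $j$). The only point requiring a little care is the ordering of the two error contributions --- the approximation penalty $\frac{2\epsilon}{1-\gamma}$ comes from applying GPI to the \emph{approximate} value functions, while the reward-gap penalty $\frac{2\rdif_{ij}}{1-\gamma}$ comes from comparing the \emph{exact} $Q^{\pi_j^*}_i$ with $Q^{\piexpi}_i$; one must keep the minimization over $j$ outside the sum so that the same index $j$ is used in both terms, which is legitimate since the GPI bound holds simultaneously for all $j$.
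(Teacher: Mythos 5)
Your proposal is correct and follows essentially the same route as the paper's proof: apply Theorem~\ref{teo:gpi} to the approximate value functions on task $M_i$, invoke Lemma~\ref{teo:bound_pair} to bound $Q^{\piexpi}_i - Q^{\piexpj}_i$ by $2\rdif_{ij}/(1-\gamma)$, convert $\rdif_{ij}$ to $\phimax\,\|\w_i-\w_j\|$ via Cauchy--Schwarz, and finally minimize over $j$. Your remark about keeping the same index $j$ in both terms before taking the minimum is exactly how the paper handles it (the bound is derived for an arbitrary $j$ and then minimized).
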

\begin{proof}
The result is a direct application of Theorem~\ref{teo:gpi} and 
Lemma~\ref{teo:bound_pair}.
For any $j \in \{1, 2, ..., n\}$, we have 
\begin{equation*}
\begin{array}{llr}
Q_i^{*}(s,a) - Q^{\pi}_i(s,a)  
& \le Q_i^{*}(s,a) - Q^{\pi_j^*}_i(s,a)  + \dfrac{2}{1 - \gamma} \epsilon& 
\text{ (Theorem~\ref{teo:gpi}) } \\
& \le \dfrac{2}{1 - \gamma} \max_{s,a} |r_i(s,a) - r_j(s,a) | 
 + \dfrac{2}{1 - \gamma} \epsilon 
& \text{ (Lemma~\ref{teo:bound_pair}) } \\
& = \dfrac{2}{1 - \gamma} \max_{s,a} |\vphi(s,a)^\t \w_i - \vphi(s,a)^\t \w_j | 
 
+ \dfrac{2}{1 - \gamma} \epsilon \\
& = \dfrac{2}{1 - \gamma} \max_{s,a} |\vphi(s,a)^\t (\w_i -  \w_j) |  
+ \dfrac{2}{1 - \gamma} \epsilon \\
& \le \dfrac{2}{1 - \gamma} \max_{s,a} ||\vphi(s,a)|| \, ||\w_i -  \w_j||  
+ \dfrac{2}{1 - \gamma} \epsilon 
& \text{ (Cauchy-Schwarz's inequality) } \\
& = \dfrac{2 \phimax}{1 - \gamma} ||\w_i -  \w_j||
+ \dfrac{2}{1 - \gamma} \epsilon.  \\
\end{array}
\end{equation*}
\end{proof}

\section{Details of the experiments}
\label{sec:details_experiments}

In this section we provide additional information about our experiments. 
We start with the four-room environment and then we discuss the reacher domain.
In both cases the structure of the discussion is the same:
we start by giving a more in depth description of the environment itself, both at 
a conceptual level and at a practical level, then
we provide a thorough description of the algorithms used, and, finally, 
we explain the protocol used to carry out the experiments.

\subsection{Four-room environment}

\subsubsection{Environment}
\label{sec:environment}

In Section~\ref{sec:experiments} of the paper we gave an intuitive description 
of the four-room domain used in our experiments. In this section we provide a more 
formal definition of the environment \MM\ as a family of Markov decision 
processes (MDPs) $M$, each one associated with a task.

The environment has objects that can 
be picked up by the agent by passing over them. 
There is a total of $n_o$ objects, each belonging to 
one of $n_c \le n_o$ classes.
The class of an object determines the reward 
$r_c$ associated with it. 
An episode ends when the agent reaches the goal, upon which all 
the objects re-appear. 
We assume that $r_g$ is always $1$ but $r_c$ may 
vary: a specific instantiation of the rewards $r_c$ 
defines a \emph{task}. 
Every time a new task starts the rewards $r_c$ are sampled 
from a uniform distribution over $[-1,1]$.
Figure~\ref{fig:forage_maze}
shows the specific environment layout used, 
in which $n_o = 12$ and $n_c = 3$.

We now focus on a single task $M \in \MM$.
We start by describing the state and action spaces, and the transition dynamics.
The agent's position at any instant in time is a point $\{s_x, s_y\} \in 
[0,1]^2$. There are four actions available, 
$\A \equiv \{\mathrm{up}, \mathrm{down},\mathrm{left}, \mathrm{right}\}$.
The execution of one of the actions moves the agent $0.05$ units in the desired
direction, and normal random noise with zero mean and standard deviation 
$0.005$ is added to the position of the agent (that is, a move 
along the $x$ axis would be $s'_x = s_x \pm \Normal(0.05, 0.005)$, where 
$\Normal(0.05, 0.005)$ is a normal variable with mean $0.05$ and standard 
deviation $0.005$). If after a move the agent ends up outside of the four rooms 
or on top of a wall the move is undone. 
Otherwise, if the agent lands on top of an object it picks it up, and if it 
lands on the goal region the episode is terminated (and all objects 
re-appear).
In the specific instance of the environment shown in 
Figure~\ref{fig:forage_maze} objects
were implemented as circles of radius $0.04$, the goal is a circle of 
radius $0.1$ centered at one of the extreme points of the map, and the walls
are rectangles of width $0.04$ traversing the environment's range.

As described in the paper, within an episode each 
of the $n_o$ objects can be present or 
absent, and since they define the reward function 
a well defined Markov state must distinguish between all 
possible $2^{n_o}$ object configurations. Therefore, the state space of our MDP 
is $\S \equiv \{0,1\}^{n_o} \times \R^{2}$.
An intuitive way of visualizing \S\ is to note that 
each of the $2^{n_o}$ object configurations is potentially associated with a 
different value function over the continuous space $[0,1]^2$.

Having already described \S, \A, and $p(\cdot|s,a)$,
we only need to define the reward function $R(s,a,s')$ and 
the discount factor $\gamma$ in order to conclude the formulation of the MDP 
$M$. As discussed in Section~\ref{sec:experiments},
the reward $R(s,a,s')$ is a deterministic function of $s'$: if the agent 
is over an object of class $c$ in $s'$ it gets a reward of $r_c$, and if it is 
in the goal region it gets a reward of $r_g = 1$; in all other cases the reward 
is zero. In our experiments we fixed $\gamma = 0.95$.

By mapping each object onto its class, it is possible to 
construct features $\vphi(s, a, s')$ that perfectly predicts the reward for 
all tasks $M$ in the environment \M, as in~(\ref{eq:reward})
and~(\ref{eq:M}). 
Let $\vphi_c(s, a, s') \equiv 
\II\{$is the agent over an 
object of class $c$ in $s'$?$\}$, where $\I{\mathrm{false}} = 0$ and 
$\I{\mathrm{true}} = 1$. Similarly, let $\vphi_g(s, a, s') \equiv \II\{$is 
the agent over the goal region in $s'$?$\}$.
By concatenating the $n_c$ functions $\vphi_c$ and $\vphi_g$ we get the 
vector $\vphi(s, a, s') \in \{0,1\}^{n_c + 1}$; 
now, if we make $\w_c = r_c$ for all $c$ and $\w_{n_c + 1} = r_g$, 
it should be clear that $r(s,a,s') = \vphi(s, a, s')^\t \w$, as desired. 

Since $r(s,a,s')$ can be written in the form of ~(\ref{eq:reward}),
the definition of $M$ can be naturally extended to \MM, as in~(\ref{eq:M}).
In our experiments we assume that the agents receive a signal from \MM\ 
whenever the task changes (see Algorithms~\ref{alg:ql},~\ref{alg:ppr}, 
and~\ref{alg:sfql} and discussion below).

\subsubsection{Algorithms}
\label{sec:algorithms}

We assume that the agents know their position $\{s_x, s_y\} \in [0,1]^2$ and 
also have an ``object detector'' $\od \in \{0,1\}^{n_o}$ whose \ith\ component 
is $1$ if and only if the agent is over object $i$.
Using this information the agents build two vectors of features.
The vector $\vfp(s) \in \R^{100}$ is composed of the activations of a regular  
$10 \times 10$ grid of radial basis functions at the point $\{s_x, s_y\}$. 
Specifically, in our experiments we used Gaussian functions, that is:
\begin{equation}
\label{eq:gaussian}
\vf_{pi}(s) = \exp\left(-\dfrac{(s_x -\mat{c}_{i1})^2 +  (s_y 
-\mat{c}_{i2})^2}{\sigma}\right),
\end{equation}
where $\mat{c}_i \in \R^{2}$ is the center of the \ith\ Gaussian. 
As explained in Section~\ref{sec:exp_setup}, the value of 
$\sigma$ was determined in preliminary experiments with QL; all algorithms 
used $\sigma = 0.1$. 
In addition to $\vfp(s)$, using \od\ the agents build an ``inventory'' $\vfi(s) 
\in 
\{0,1\}^{n_o}$
whose \ith\ component indicates whether the \ith\ object has been 
picked up or not. 
The concatenation of $\vfi(s)$ and $\vfp(s)$ 
plus a constant term gives rise to the feature vector $\vf(s) \in \R^D$ 
used by all the agents to represent the value function:
$\tilde{Q}^{\pi}(s,a) = \vf(s)^\t\vweights_a^{\pi}$, where 
$\vweights_a^{\pi} \in \R^D$ are learned weights. 

It is instructive to take a closer look at how exactly SFQL represents 
the value function. Note that, even though our algorithm also represents 
$\tilde{Q}^{\pi}$ as a linear combination of the features $\vf(s)$, it never 
explicitly computes $\vweights^{\pi}_a$. Specifically, SFQL represent SFs as 
$\tilde{\vpsi}^{\pi}(s,a) = \vf(s)^\t \Z^\pi_{a}$, 
where $\Z^\pi_{a} \in \R^{D \times d}$, and the value function 
as $\tilde{Q}^{\pi}(s,a) = \tilde{\vpsi}^{\pi}(s,a)^\t \wt = 
\vf(s)^\t \Z^\pi_{a} \wt$. By making $\vweights^\pi_a = \Z^\pi_{a} \wt$,
it becomes clear that SFQL unfolds the problem of learning $\vweights^\pi_a$ 
into the sub-problems of learning $\Z^\pi_{a}$ and \wt. These 
parameters are learned via gradient descent 
in order to minimize losses induced 
by~(\ref{eq:bellman_psi}) and~(\ref{eq:reward}), respectively (see below).

The pseudo-codes of QL, PRQL, and SFQL are given in 
Algorithms~\ref{alg:ql}, \ref{alg:ppr}, and \ref{alg:sfql}, respectively. 
As one can see, all algorithms used an $\epsilon$-greedy policy 
to explore the environment, 
with $\epsilon = 0.15$~\cp{sutton98reinforcement}. 
Two design choices deserve to be discussed here. First, as mentioned in 
Section~\ref{sec:environment}, the agents ``know'' when the task changes. This 
makes it possible for the algorithms to take measures like reinitializing the 
weights $\vprm^{\pi}_a$ or adding a new representative to the set of decision 
policies. 
Another design choice, this one specific to PRQL and SFQL, was not to limit 
the number of decision policies (or $\tvpsi^{\pi_i}$) stored.
It is not difficult to come up with strategies to avoid both 
an explicit end-of-task signal and an ever-growing set of policies. For 
example, in Section~\ref{sec:gpi_sf} we discuss how \wt\ can be 
used to select which $\tvpsi^{\pi_i}$ to keep in the case of limited memory. 
Also, by monitoring the error in the approximation 
$\tvphi(s,a,s')^\t\wt$ one can detect when the task has 
changed in a significant way. Although these are interesting extensions, 
given the introductory character of this paper 
we refrained from overspecializing the algorithms 
in order to illustrate the properties of the proposed approach 
in the clearest way possible.

\begin{algorithm}[t]
   \caption{QL} 
   \label{alg:ql}
\begin{algorithmic}[1]
\REQUIRE 
\begin{tabular}{cl}
$\epsilon$ & exploration parameter for $\epsilon$-greedy strategy\\
$\alpha$ & learning rate\\
\end{tabular}
\FOR{$t \la 1, 2, ..., \mathrm{num\_tasks}$}
\STATE $\vprm_a \la $ small random initial values {\bf for} all $a \in \A$
\label{it:initialize_z}
\STATE $\mathrm{new\_episode} \la \mathrm{true}$
\FOR{$i \la 1, 2, ..., \mathrm{num\_steps}$}
\IF{$\mathrm{new\_episode}$}
\STATE $\mathrm{new\_episode} \la \mathrm{false}$
\STATE $s \la $ initial state
\ENDIF
\STATE $\mathrm{sel\_rand\_a} \sim \mathrm{Bernoulli}(\epsilon)$
\COMMENT{Sample from a Bernoulli distribution with parameter $\epsilon$}
\STATE {\bf if} {$\mathrm{sel\_rand\_a}$} {\bf then}  
$a \sim \mathrm{Uniform}\left(\{1, 2, ..., |A|\}\right)$ 
\COMMENT{$\epsilon$-greedy exploration strategy}
\STATE {\bf else} $a \la \argmax_{b} 
Q(s,b)$ 
\STATE Take action $a$ and observe reward $r$ and next state $s'$
\IF{$s'$ is a terminal state}
\STATE $\gamma \la 0$
\STATE $\mathrm{new\_episode} \la \mathrm{true}$
\ENDIF
\STATE $\vprm_a \la \vprm_a + \alpha {\left(r + \gamma \max_{a'} 
Q(s', a') - Q(s,a)\right)}\nabla_{\vprm} Q(s,a)$ \label{it:ql_update}
\STATEx \COMMENT{For $Q(s,a) = \vf(s)^\t \vprm_a$, $\nabla_{\vprm} Q(s,a) = \vf(s)$}
\STATE $s \la s'$
\ENDFOR
\ENDFOR
\end{algorithmic}
\end{algorithm}

\begin{algorithm}[t]
   \caption{PRQL} 
   \label{alg:ppr}
\begin{algorithmic}[1]
\REQUIRE 
\begin{tabular}{cl}
$\epsilon$ & exploration parameter for $\epsilon$-greedy strategy\\
$\alpha$ & learning rate \\
$\eta$ & parameter to control probability to reuse old policy\\ 
$\tau$ & parameter to control bias in favor of stronger policies\\ 
\end{tabular}
\FOR{$t \la 1, 2, ..., \mathrm{num\_tasks}$}
\FOR {$k \la 1, 2, ..., t$} 
\STATE $\score_k \la 0$
\COMMENT{$\score_k$ is the score associated with policy $\pi_k$}
\STATE $\ntpu_k \la 0$
\COMMENT{$\ntpu_k$ is the number of times policy $\pi_k$ was 
used}
\ENDFOR
\STATE $\act \la t$
\COMMENT{$\act$ is the index of the policy currently being used}
\STATE $\vprm^t_a \la $ small random initial values 
\STATE $\currscore \la 0$
\STATE $\mathrm{new\_episode} \la \mathrm{true}$
\FOR{$i \la 1, 2, ..., \mathrm{num\_steps}$}
\IF{$\mathrm{new\_episode}$}
\STATE $\score_{\act} \la \dfrac{\score_{\act} \times \ntpu_{\act} + 
\currscore}{\ntpu_{\act} + 1}$
\COMMENT{Update score for policy currently being used}
\STATE {\bf for} {$k \la 1, 2, ..., t$} {\bf do} $p_{k} \la e^{\tau \times 
\score_{k}} / {\sum_{j=1}^{t} e^{\tau \times \score_j}} $
\COMMENT{Turn scores into probabilities}
\STATE $c \sim \mathrm{Multinomial}(p_1, p_2, ..., p_t)$
\COMMENT{Select policy $c$ with probability $p_c$}
\STATE $\ntpu_{\act} \la \ntpu_{\act} + 1$
\COMMENT{Update number of times policy $\pi_c$ has been used}
\STATE $\currscore \la 0$
\STATE $\mathrm{new\_episode} \la \mathrm{false}$
\STATE $s \la $ initial state
\ENDIF
\STATE {\bf if} {$t \ne c$} {\bf then} $\mathrm{use\_prev\_policy} \sim 
\mathrm{Bernoulli}(\eta)$
{\bf else} $\mathrm{use\_prev\_policy} \la \mathrm{false}$
\IF[Action will be selected by $\pi_c$, the policy being 
reused]{$\mathrm{use\_prev\_policy}$}
\STATE $a \la \argmax_{a'} Q_{\act}(s,a')$
\ELSE[Action will be selected by $\pi_c$, the most recent policy]
\STATE $\mathrm{sel\_rand\_a} \sim \mathrm{Bernoulli}(\epsilon)$
\STATE {\bf if} $\mathrm{sel\_rand\_a}$ {\bf then} $a \sim 
\mathrm{Uniform}\left(\{1, 2, ..., 
|A|\}\right)$ {\bf else} $a \la \argmax_{a'} Q_t(s,a')$ 
\STATEx \COMMENT{$\epsilon$-greedy exploration strategy}
\ENDIF
\STATE Take action $a$ and observe reward $r$ and next state $s'$
\IF{$s'$ is a terminal state}
\STATE $\gamma \la 0$
\STATE $\mathrm{new\_episode} \la \mathrm{true}$
\ENDIF
\STATE $\vprm^t_a \la \vprm^t_a + \alpha {\left(r + \gamma \max_{a'} 
Q_t(s', a') - Q_t(s,a)\right)}\nabla_{\vprm} Q_t(s,a)$ 
\STATEx \COMMENT{For $Q_t(s,a) = \vf(s)^\t \vprm^t_a$, $\nabla_{\vprm} Q_t(s,a) = 
\vf(s)$}
\STATE $\currscore \la \currscore + r$
\STATE $s \la s'$
\ENDFOR
\ENDFOR
\end{algorithmic}
\end{algorithm}

\begin{algorithm}[t]
   \caption{SFQL} 
   \label{alg:sfql}
\begin{algorithmic}[1]
\REQUIRE 
\begin{tabular}{cl}
$\epsilon$ & exploration parameter for $\epsilon$-greedy strategy\\
$\alpha$ & learning rate for \vpsi's parameters \\
$\alpha_w$ & learning rate for \w\ \\
$\vphi$ & features to be predicted by SFs \\
\end{tabular}
\FOR{$t \la 1, 2, ..., \mathrm{num\_tasks}$}
\STATE $\w_t \la $ small random initial values \label{it:keep_w} 
\STATE $\Z^t_a \la $ small random initial values in $\R^{D \times h}$
{\bf if} $t = 1$ {\bf else} $\Z^{t-1}_a$ 
\STATEx 
\COMMENT{The \kth\ column of $\Z^t_a$, $\z^{tk}_a$, are the 
parameters of the \kth\ component of $\tvpsi_t$, $(\tvpsi_t)_k \equiv \tvpsi_{tk}$}
\STATE $\mathrm{new\_episode} \la \mathrm{true}$
\FOR{$i \la 1, 2, ..., \mathrm{num\_steps}$}
\IF{$\mathrm{new\_episode}$}
\STATE $\mathrm{new\_episode} \la \mathrm{false}$
\STATE $s \la $ initial state
\ENDIF
\STATE  $c \la \argmax_{k \in \{1, 2, ..., t\}} 
\max_{b} \tvpsi_k(s,b)^{\t} \w_t$
\label{it:gpi_action_selection}
\STATEx \COMMENT{$c$ is the index of the \tvpsi\ associated with the largest value in 
$s$}
\STATE $\mathrm{sel\_rand\_a} \sim \mathrm{Bernoulli}(\epsilon)$
\COMMENT{Sample from a Bernoulli distribution with parameter $\epsilon$}
\STATE {\bf if \label{it:epsilon_greedy_n}} {$\mathrm{sel\_rand\_a}$} {\bf then}  
$a \sim \mathrm{Uniform}\left(\{1, 2, ..., |A|\}\right)$ 
\COMMENT{$\epsilon$-greedy exploration strategy} 
\STATE {\bf else  \label{it:epsilon_greedy}} $a \la \argmax_{b} 
\tvpsi_{c}(s,b)^{\t} \w_t$ 
\STATE Take action $a$ and observe reward $r$ and next state $s'$
\IF{$s'$ is a terminal state}
\STATE $\gamma \la 0$
\STATE $\mathrm{new\_episode} \la \mathrm{true}$
\ELSE 
\STATE  $a' \la \argmax_b\max_{k \in \{1, 2, ..., t\}}\tvpsi_k(s',b)^{\t} \w_t$
\COMMENT{$a'$ is the action with the highest value in $s'$}
\ENDIF
\STATE $\w_t \la \w_t + \alpha_w {\left[r - \vphi(s, a, s')^{\t} 
\w \right]} \vphi(s, a, s')$ 
\COMMENT{Update $\w$ \label{it:update_w}}
\FOR{$k \la 1, 2, ..., d$}
\STATE $\z^{tk}_a \la \z^{tk}_a + \alpha {\left[\vphi_k(s, a, s') + \gamma 
\tvpsi_{tk}(s', a') - \tvpsi_{tk}(s,a)\right]}\nabla_{\z} 
\tvpsi_{tk}(s,a)$ 
\label{it:update_psi}
\STATEx \COMMENT{For $\tvpsi_t(s,a) = \vf(s)^\t \Z^t_a$, $\nabla_{\vprm} 
\tvpsi_{tk}(s,a) = \vf(s)$}
\ENDFOR
\IF{$c \ne t$ \label{it:update_previous_psi_b} }
\STATE {$a' \la \argmax_{b} \tvpsi_c(s',b)^{\t} \w_c$ \label{it:gpi_action_selection2}}
\COMMENT{$a'$ is selected according to reward function induced by $\w_c$}
\FOR{$k \la 1, 2, ..., d$}
\STATE $\z^{ck}_a \la \z^{ck}_a + \alpha {\left[\vphi_k(s, a, s') + \gamma 
\tvpsi_{ck}(s', a') - \tvpsi_{ck}(s,a)\right]}\nabla_{\z} 
\tvpsi_{ck}(s,a)$ 
\COMMENT{Update $\tvpsi_c$ \label{it:update_previous_psi}}
\ENDFOR
\ENDIF \label{it:update_previous_psi_e}
\STATE $s \la s'$    
\ENDFOR
\ENDFOR
\end{algorithmic}
\end{algorithm}

\subsubsubsection{{\bf SFQL}}

We now discuss some characteristics of the specific SFQL algorithm 
used in the experiments. 
First, note that errors in the value-function approximation 
can potentially have a negative effect on GPI, since an overestimated 
$\Qt^{\pi_i}(s,a)$ may be the function determining the action selected
by $\pi$ in~(\ref{eq:pitmax}). 
One way of keeping this phenomenon from occurring indefinitely is 
to continue to update the functions $\Qt^{\pi_i}(s,a)$ that are relevant for 
action selection. In the context of SFs this corresponds to 
constantly refining $\tvpsi^{\pi_i}$, which can be done as long as we 
have access to $\pi_i$. 
In the scenario considered here we can recover $\pi_i$ by keeping 
the weights $\wt_i$ used to learn the SFs $\tvpsi^{\pi_i}$
(line~\ref{it:keep_w} of Algorithm~\ref{alg:sfql}). Note that with this information
one can easily update any $\tvpsi^{\pi_i}$ off-policy; as shown 
in lines~\ref{it:update_previous_psi_b}--\ref{it:update_previous_psi_e} of Algorithm~\ref{alg:sfql}, in the version of SFQL used in the 
experiments we 
always update the $\tvpsi^{\pi_i}$ that achieves the maximum
in~(\ref{eq:pitmax}) (line~\ref{it:gpi_action_selection} 
of the pseudo-code). 

Next we discuss the details of how \wt\ and $\tvpsi^{\pi}$ are 
learned.
We start by showing the loss function used to compute \wt: 
\begin{equation}
\label{eq:loss_w}
\mathrm{L}_w(\wt) = \E_{(s,a,s') \sim \mathcal{D}}\left[ 
\left(r(s,a,s') - \tvphi(s,a,s')^{\t} \wt\right)^2\right],
\end{equation}
where $\mathcal{D}$ is a distribution over $\S \times \A \times \S$
which in RL is usually the result of executing a policy 
under the environment's dynamics $p(\cdot|s,a)$. The minimization 
of~(\ref{eq:loss_w}) is done in line~\ref{it:update_w} of 
Algorithm~\ref{alg:sfql}. As discussed, SFQL keeps a set of $\tvpsi^{\pi_i}$,
each one associated with a policy $\pi_i$. The loss function used to compute 
each $\tvpsi^{\pi_i}$ is 
\begin{equation}
\label{eq:loss_psi}
\begin{array}{cl}
\mathrm{L}_Z(\tvpsi^{\pi_i}) \equiv  \mathrm{L}_Z(\Z^{\pi_i}_{a})  
& = \E_{(s,a, s') \sim \mathcal{D}}\left[\left(\tvphi(s,a,s')
+ \gamma \tvpsi^{\pi_i}(s',a') - \tvpsi^{\pi_i}(s,a) \right)^2\right] \\
& = \E_{(s,a, s') \sim \mathcal{D}}\left[\left(\tvphi(s,a,s')
+ \gamma \vf(s')^\t \Z^{\pi_i}_{a'} - \vf(s)^\t \Z^{\pi_i}_{a} 
\right)^2\right], \\
\end{array}
\end{equation}
where $a'  = \argmax_b \Qt^{\pi_i}(s',b) = \argmax_b 
\tvpsi^{\pi_i}(s',b)^\t\wt$.   
Note that the policy that induces $\mathcal{D}$ is not necessarily 
$\pi_i$---that is, $\tvpsi^{\pi_i}(s,a)$ can be learned off-policy, as 
discussed 
above. 
As usual in RL, the target 
$\tvphi(s,a,s') + \gamma \vf(s')^\t \Z^{\pi_i}_{a'}$ is considered fixed, 
{\sl i.e.}, the loss $\mathrm{L}_Z$ is minimized with respect to 
$\Z^{\pi_i}_{a}$ only.
The minimization of~(\ref{eq:loss_psi}) is done in 
lines \ref{it:update_psi}  
and \ref{it:update_previous_psi} of Algorithm~\ref{alg:sfql}. 

As discussed in Section~\ref{sec:experiments}, 
we used two versions of SFQL in our experiments. In the first one, 
SFQL-$\vphi$, we assume that the agent knows how to construct 
a vector of features \vphi\ that perfectly predicts the reward for 
all tasks $M$ in the environment \M. 
The other version of our 
algorithm, SFQL-$h$, uses an approximate \tvphi\ learned from data. 
We now give details of how \tvphi\ was computed in this case.
In order to learn \tvphi, we used the samples $(s_i, 
a_i, r_i, s'_i)_t$ collected by QL in the first $t = 1, 2, ..., 20$ tasks. 
Since 
this results in an unbalanced dataset in 
which most of the transitions have $r_i=0$, we kept all the samples with 
$r_i \ne 0$ 
and discarded $75\%$ of the remaining samples. We then used the resulting 
dataset to minimize the following loss:
\begin{equation}
\label{eq:loss_phi}
\begin{array}{cl}
\mathrm{L}_H(\tvphi) \equiv  \mathrm{L}_H(\H, \w_t)  
= \E_{(s, s', r) \sim 
\mathcal{D}'_t}\left[\left(
\varsigma(\vf(s,s')^\t\H)^\t\w_t -r\right) ^2\right] 
\; \text{ for } t = 1, 2, ..., 20,\\
\end{array}
\end{equation}
where $\mathcal{D}'_t$ reflects the 
down-sampling of zero rewards. 
The vector of features 
$\vf(s,s') $ is the concatenation of $\vf(s)$ and $\vf(s')$, and 
$\varsigma(\cdot)$ is a sigmoid function applied element-wise. 
We note that $\od(s') = \vf_i(s') - \vf_i(s)$, from which it is possible to 
compute an ``exact'' $\tvphi = \vphi$. In order to 
minimize~(\ref{eq:loss_phi})
we used the multi-task framework proposed by~\ct{caruana97multitask}.
Simply put, \ctp{caruana97multitask} approach consists in 
looking at $\varsigma(\vf(s,s')^\t\H)^\t\w_t$ as a neural network with one 
hidden layer and $20$ outputs, that is, 
\wt\ is replaced with $\tilde{\mat{W}} \in \R^{h \times 20}$ and 
a reward $r$ received in the \th{t} task is extended into a 
$20$-dimensional vector in which the \th{t} component is $r$ and 
all other components are zero. One can then minimize~(\ref{eq:loss_phi}) 
with respect to the parameters \H\ and $\w_t$ 
through gradient descent.

Although this strategy of using the $k$ first tasks to learn $\tvphi$ is 
feasible, in practice one may want to replace this arbitrary decision 
with a more adaptive approach, such as updating $\tvphi$ online
until a certain stop criterion is satisfied. Note though that a 
significant change in \tvphi\ renders the SFs $\tvpsi^{\pi_i}$ outdated, and 
thus the benefits of refining the former should be weighed against 
the overhead of constantly updating the latter, potentially off-policy.

As one can see in this section, we tried to keep the methods 
as simple as possible in order to 
not obfuscate the main message of the paper, which is 
not to propose any particular algorithm but rather 
to present a general framework for transfer 
based on the combination of SFs and GPI.

\subsubsection{Experimental setup}
\label{sec:exp_setup}

In this section we describe the precise protocol adopted to carry out our 
experiments. Our initial step was to use QL, the basic algorithm 
behind all three algorithms, to make some decisions that apply to all of them. 
First, in order to define the features $\vf(s)$ used by the 
algorithms, we checked the performance of QL when using different 
configurations of the vector $\vf_p(s)$ giving the position of the agent. 
Specifically, we tried two-dimensional grids of Gaussians with $5$, $10$, $15$, 
and $20$ functions per dimension. Since the improvement in QL's performance 
when using a number of functions larger than $10$ was not very significant, we 
adopted a $10 \times 10$ grid of Gaussians in our experiments. We also varied 
the value of the parameter $\sigma$ appearing in~(\ref{eq:gaussian}) in the set 
$\{0.01, 0.1, 0.3\}$. Here the best performance of QL was obtained with $\sigma 
= 0.1$,  which was then the value adopted throughout the experiments. 
The parameter $\epsilon$ used for $\epsilon$-greedy exploration was also 
set based on QL's performance.
Specifically, we varied 
$\epsilon$ in the set $\{0.15, 0.2, 0.3\}$ and verified that the best 
results were obtained with $\epsilon = 0.15$ (we tried relatively large 
values for $\epsilon$ because of the non-stationarity of the underlying
environment). Finally, we tested two variants of QL: one that resets 
the weights $\vprm^{\pi}_a$ every time a new task starts,
as in line~\ref{it:initialize_z} of Algorithm~\ref{alg:ql}, 
and one that keeps the old values. Since the performance of the former was 
significantly better, we adopted this strategy for all algorithms.

QL, PRQL, and SFQL depend on different sets of parameters, as shown in 
Algorithms~\ref{alg:ql}, \ref{alg:ppr}, and \ref{alg:sfql}.
In order to properly configure the algorithms we tried three different values 
for each parameter and checked the performance of the 
corresponding agents under each 
resulting configuration. Specifically, we tried the following sets of values 
for each parameter:
\begin{center}
\begin{tabular}{c|l|l}
{\bf Parameter} & {\bf Algorithms} & {\bf Values} \\ \hline 
$\alpha$   & QL, PRQL, SFQL & $\{0.01, 0.05, 0.1\}$ \\
$\alpha_w$ & SFQL           & $\{0.01, 0.05, 0.1\}$ \\
$\eta$     & PRQL           & $\{0.1, 0.3, 0.5\}$ \\
$\tau$     & PRQL           & $\{1, 10, 100\}$ \\
\end{tabular}
\end{center}
The cross-product of the values above resulted in $3$ configurations of QL, 
$27$ configurations of PRQL, and $9$ configurations of SFQL. 
The results reported correspond to the best performance of each algorithm, that 
is, for each algorithm we picked the configuration that lead to 
the highest average return over all tasks.

\subsection{Reacher environment}
\label{sec:reacher}

\subsubsection{Environment}

The reacher environment is a two-joint 
torque-controlled robotic arm simulated using 
the MuJoCo physics engine~\cp{todorov2012mujoco}. 
It is based on one of the domains used 
by~\ct{lillicrap2015continuous}. 
This is a particularly appropriate domain to illustrate our ideas because it is  straightforward to define multiple tasks (goal locations) sharing the same 
dynamics. 

The problem's state space $\S \subset \R^{4}$ is composed of 
the angle and angular velocities of the two joints. 
The two-dimensional continuous action space 
\A\ was discretized using $3$ values per dimension 
(maximum positive, maximum negative or zero torque for each actuator), 
resulting in a total of $9$ discrete actions. 
We adopted a discount factor of $\gamma=0.9$. 

The reward received at each time step was $-\delta$, 
where $\delta$ is the Euclidean distance between 
the target position and the tip of the arm. 
The start state at each episode was defined as 
follows during training: the inner joint angle was sampled from an uniform distribution over $[0, 2\pi]$, the outer joint was sampled from an uniform distribution over $\{-\pi/2, \pi/2\}$, and both angular velocities were set to $0$ (during the evaluation phase two fixed start states were used---see below). We used a time step of $0.02$s and episodes lasted for $10$s ($500$ time steps). We defined $12$ target locations, $4$ of which we 
used for training and $8$ were reserved for testing (see Figure~\ref{fig:results_reacher}).

\subsubsection{Algorithms}
\label{sec:reacher_alg}

The baseline method used for comparisons 
in the reacher domain was the DQN algorithm by \ct{mnih2015human}. 
In order to make it possible for DQN to generalize across tasks we provided the target locations as part of the state description. The action-value function $\tilde{Q}$ was represented by a multi-layer perceptron (MLP) with two hidden layers of $256$ linear 
units followed by $\tanh$ non-linearities. The output of the network was a vector in $ \R^{9}$ with the estimated value associated with each action. The replay buffer adopted was large enough to retain all the transitions seen by the agent---that is, we never 
removed transitions from the buffer (this helps prevent DQN 
from ``forgetting'' previously seen tasks when learning new ones). 
Each value-function update used a mini-batch of $32$ transitions 
sampled uniformly from the 
replay buffer, and the associated minimization (line~\ref{it:ql_update} of 
Algorithm~\ref{alg:ql}) was carried out using the Adam optimizer
with a learning rate of $10^{-3}$~\cp{kingma2014adam}. 

SFDQN is similar to SFQL, whose pseudo-code is shown in Algorithm~\ref{alg:sfql}, with a few modifications to make learning with nonlinear function approximators more stable.
The vector of features $\vphi \in \R^{12}$ 
used by SFDQN was composed of the negation of the 
distances to all target 
locations.\footnote{In fact, instead of the negation of the distances $-\delta$ we  
used $1-\delta$ in the definition of both the rewards and the features 
$\vphi_i$. Since in our domain $\delta < 1$, this change made the rewards always 
positive. This helps preventing randomly-initialized value function 
approximations from dominating the `$\max$' operation in~(\ref{eq:pitmax}).}
We used a separate MLP to represent each of the four $\tvpsi_i$. 
The MLP architecture was the same as the one adopted with DQN,
except that in this case the output of the network was a matrix $\tmpsi_i \in \R^{9 \times 12}$ representing $\tvpsi_i(s,a) \in \R^{12}$ for each $a \in \A$. This means that the parameters $\Z_i$ in Algorithm~\ref{alg:sfql} should now be interpreted as weights of a nonlinear neural network. 
The final output of the network was computed 
as $\tvpsi_i^{_\t} \w_t$, where $\w_t \in \R^{12}$ is 
an one-hot vector indicating the task $t$ currently 
active. 
Analogously to the the target locations given as inputs to DQN, 
here we assume that $\w_t$ is provided by the environment. 
Again making the connection with Algorithm~\ref{alg:sfql}, this means that line~\ref{it:update_w} would be skipped.

Following~\ct{mnih2015human}, in order to make the training of the neural network more stable we updated $\tvpsi$ using a target network. This corresponds to replacing (\ref{eq:loss_psi}) with the following loss:
\begin{equation*}
\label{eq:loss_psi2}
\begin{array}{cl}
\mathrm{L}_Z(\tvpsi^{\pi_i}) \equiv  \mathrm{L}_Z(\Z^{\pi_i})  
= \E_{(s,a, s') \sim \mathcal{D}}\left[\left(\vphi(s,a,s')
+ \gamma \tvpsi_{^-}^{\pi_i}(s',a') - \tvpsi^{\pi_i}(s,a) \right)^2\right], 
\end{array}
\end{equation*}
where $\tvpsi_{^-}^{\pi_i}$ is a target network that remained fixed during updates and was periodically replaced by $\tvpsi^{\pi_i}$ at every $1000$ steps 
(the same configuration used for DQN). 
For each transition we updated all four MLPs $\tvpsi_i$
in order to minimize losses derived from~(\ref{eq:bellman_psi}). 
As explained in Section~\ref{sec:experiments},
the policies $\pi_i(s)$ used in~(\ref{eq:bellman_psi}) were the GPI policies associated 
with each training task, $\pi_i(s) \in \argmax_a \max_{j} \tvpsi_j(s,a)^\t \w_i$.
An easy way to modify Algorithm~\ref{alg:sfql}) to reflect 
this update strategy would be to 
select the action $a'$ in line~\ref{it:gpi_action_selection2} using $\pi_i(s)$
and then repeat the block in lines~\ref{it:update_previous_psi_b}--\ref{it:update_previous_psi_e}
for all $\w_i$, with $i \in \{1, 2, 3, 4\}$ and $i \ne t$, where 
$t$ is the current task.

\subsubsection{Experimental setup}

The agents were trained for $200\,000$ transitions 
on each of the $4$ training tasks. 
Data was collected using an $\epsilon$-greedy policy with 
$\epsilon = 0.1$ (for SFDQN, this corresponds to lines~\ref{it:epsilon_greedy_n} and~\ref{it:epsilon_greedy} of Algorithm~\ref{alg:sfql}). 
As is common in fixed episode-length control tasks, we excluded the terminal transitions during training to make the value of states independent of time, which corresponds to learning continuing policies~\cp{lillicrap2015continuous}.

During the entire learning process we monitored the performance of the agents on all $12$ tasks when using an $\epsilon$-greedy policy with $\epsilon = 0.03$. The results shown in Figure~\ref{fig:results_reacher} reflect the performance of this policy. Specifically, 
the return shown in the figure is the sum of rewards received 
by the $0.03$-greedy policy over two episodes starting from fixed states. Since the maximum possible return varies across tasks, we normalized the returns per task 
based on the performance of standard DQN on separate experiments 
(this is true for both training and test tasks). Specifically, we carried out the 
normalization as follows. First, we ran DQN $30$ times on each task and 
recorded the algorithm's performance before and after training. Let $\bar{G}_b$ and
$\bar{G}_a$ be the average performance of DQN over the $30$ runs 
before and after training, respectively. Then, if during our actual experiments 
DQN or SFDQN got a return of $G$, the normalized version of this metric was 
obtained as $G_n = (G - \bar{G}_b) / (\bar{G}_a - \bar{G}_b)$.
These are the values shown in Figure~\ref{fig:results_reacher}.
Visual inspection of videos 
extracted from the experiments with DQN alone 
suggests that the returns used for normalization were obtained by near-optimal policies that reach the targets almost directly. 

\section{Additional empirical analysis}
\label{sec:extra_results}

In this section we report empirical results that had to be left out of the main 
paper due to the space limit. Specifically, the objective of the experiments 
described here is to provide a deeper understanding of SFQL, in particular,
and of SFs, more generally. We use the four-room environment to carry out our
empirical investigation.

\subsection{Understanding the types of transfer promoted by SFs}

We start by asking why exactly SFQL performed so much better than QL and PRQL 
in our experiments (see Figure~\ref{fig:results_four_room}).
Note that there are several possible reasons for that to be the case. 
As shown in~(\ref{eq:sf}), SFQL uses a decoupled 
representation of the value function in which the environment's dynamics are
dissociated from the rewards. 
If the reward function of a given environment can be non-trivially decomposed 
in the form~(\ref{eq:reward}) assumed by SFQL, 
the algorithm can potentially build on this knowledge to quickly 
learn the value function and to adapt to 
changes in the environment, as discussed in Section~\ref{sec:sf}.
In our experiments not only did we know that such a non-trivial decomposition
exists, we actually provided such an information to SFQL---
either directly, through a handcrafted $\vphi$, or indirectly, 
by providing features that allow for \vphi\ to be recovered exactly. 
Although this fact should help explain the good results of SFQL,
it does not seem to be the main reason for the difference in 
performance. Observe in Figure~\ref{fig:results_four_room} how the advantage of 
SFQL over the other methods only starts to show up in the second task, and it 
only becomes apparent from the third task on. This suggests that 
the the algorithm's good performance is indeed due to some form of 
transfer.

But what kind of transfer, exactly? We note that SFQL naturally promotes two 
forms of transfer. The first one is of course a consequence of GPI.
As discussed in the paper, SFQL applies GPI by storing a set of SFs 
$\tvpsi^{\pi_i}$. Note though that SFs promote a weaker form of transfer even 
when only a \emph{single} $\tvpsi^{\pi}$ exists. To see why this is so, observe 
that if $\tvpsi^{\pi}$ persists from task $t$ to $t+1$, instead of arbitrary
approximations $\Qt^{\pi}$ one will have reasonable estimates of $\pi$'s value 
function under the current $\wt$. In other words, $\tvpsi^{\pi}$ will 
transfer knowledge about $\pi$ from one task to the other.

In order to have a better understanding of the two types of transfer promoted 
by SFs, we carried out experiments in which we tried to 
isolate as much as possible each one of them. 
Specifically, we repeated the 
experiments shown in Figure~\ref{fig:results_four_room} but now 
running SFQL with and without GPI (we can turn off GPI 
by replacing $c$ with $t$ in line~\ref{it:gpi_action_selection} 
of Algorithm~\ref{alg:sfql}).

\begin{figure}[tb]
\centering
 \includegraphics[scale=0.25]{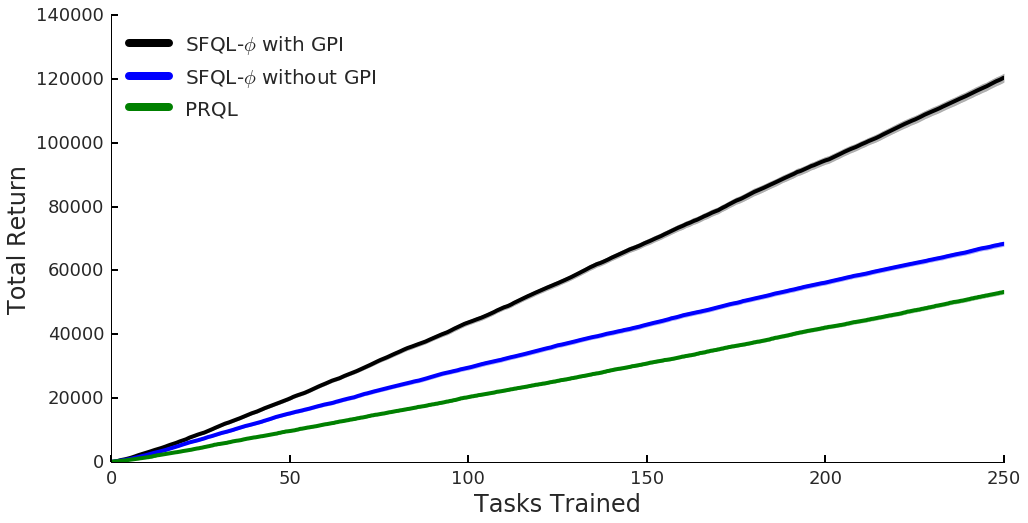}
 \caption{Understanding how much each type of transfer 
 promoted by SFs helps in performance.
 Results averaged over $30$ runs; 
 standard errors are shown as shadowed regions but are almost imperceptible
 at this scale. PRQL's results are shown for reference.
 \label{fig:sfql_transfer}}
\end{figure}

The results of our experiments are shown in Figure~\ref{fig:sfql_transfer}.
It is interesting to note that even without GPI SFQL initially outperforms PRQL. 
This is probably the combined effect of the two factors 
discussed above: the decoupled representation of the value function plus the 
weak transfer promoted by $\tvpsi^{\pi}$. 
Note though that, although these factors do give SFQL a head-start, 
eventually both algorithms reach the same performance level, as clear by 
the slope of the respective curves. 
In contrast, SFQL with GPI consistently outperforms the other two methods, 
which is evidence in favor of 
the hypothesis that GPI is indeed a crucial component of the proposed approach.
Another evidence in this direction is given in 
Figure~\ref{fig:sfql_transfer_functions}, which shows all the functions 
computed by the SFQL agent. Note how after only $200$ transitions into a 
new task SFQL already has a good approximation of the reward function, which,
combined with the set of previously computed $\tvpsi^{\pi_i}$, 
with $i < t$, provide a very informative value function even without the 
current $\tvpsi^{\pi_t}$.

\begin{figure}
\centering
\begin{tabular}{cccl}
\includegraphics[height=22mm,width=22mm]{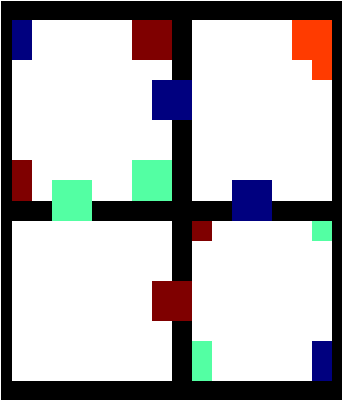} & 
\includegraphics[height=22mm,width=22mm]{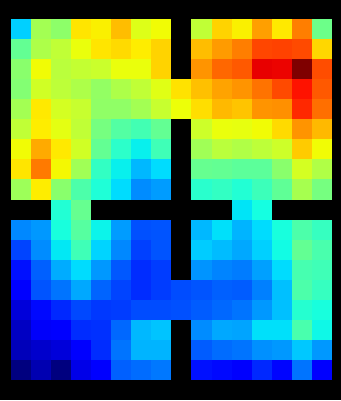} &
\includegraphics[height=22mm,width=22mm]{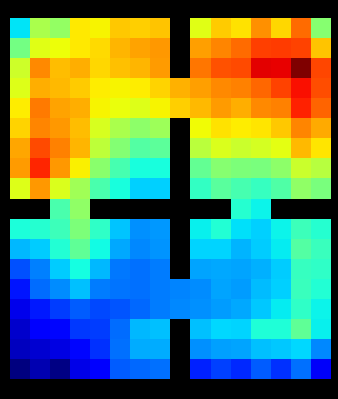} & \\
\includegraphics[height=22mm,width=22mm]{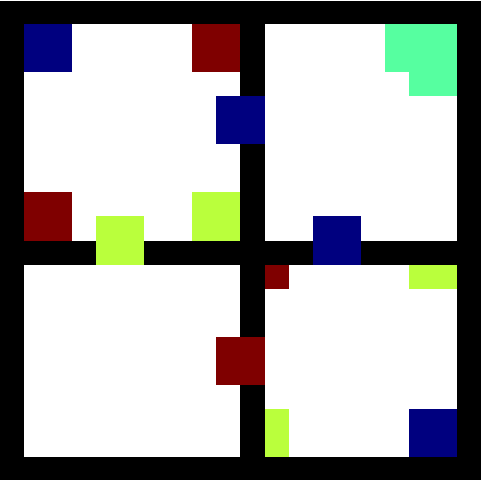} & 
\includegraphics[height=22mm,width=22mm]{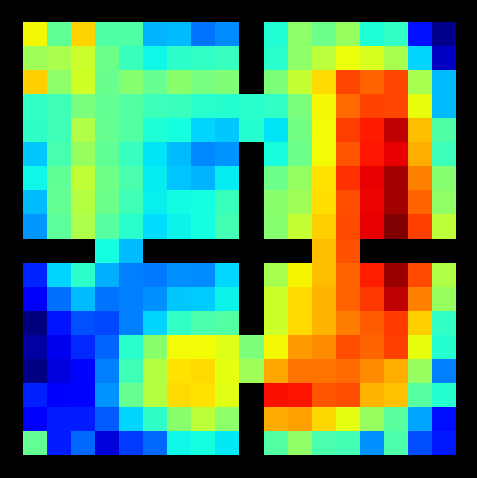} &
\includegraphics[height=22mm,width=22mm]{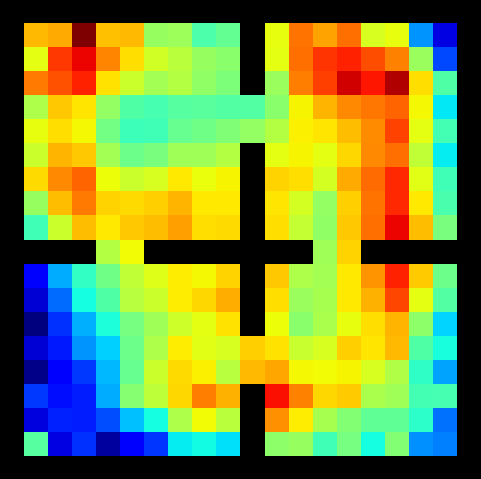} & \\ 
\includegraphics[height=22mm,width=22mm]{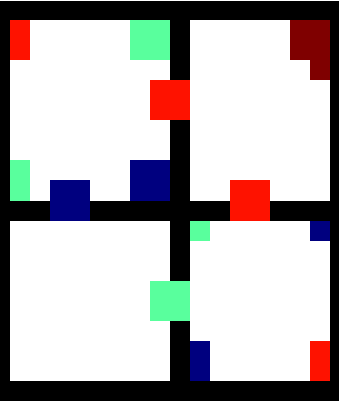} & 
\includegraphics[height=22mm,width=22mm]{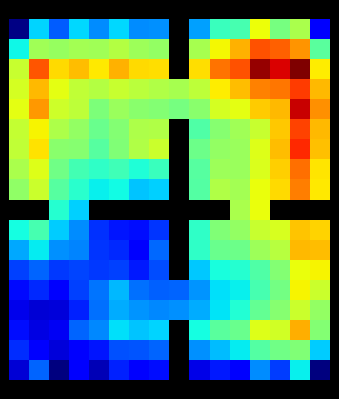} &
\includegraphics[height=22mm,width=22mm]{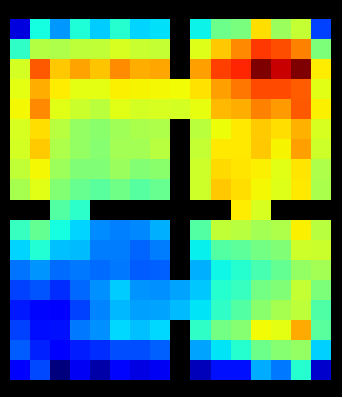} & \\ 
\\
$\tvphi(s,a,s')^\t \wt_t$ & 
$\max\limits_{a, i < t} \tvpsi_i(s,a)^\t \wt_t$ & 
$\max\limits_{a, i \le t} \tvpsi_i(s,a)^\t \wt_t$ 
\end{tabular}
\caption{Functions computed by SFQL after $200$ transitions into 
three randomly selected tasks (all objects present).
\label{fig:sfql_transfer_functions}}
\vspace{-4mm}
\end{figure}

\subsection{Analyzing the robustness of SFs}

As discussed in the previous section, part of the benefits provided by SFs come 
from the decoupled representation of the value function shown in~(\ref{eq:sf}), 
which depends crucially on a decomposition of the reward function 
$\tvphi(s,a,s')^\t\w \approx r(s,a,s')$. 
In Section~\ref{sec:experiments} we illustrated 
two ways in which such a decomposition can be obtained: by 
handcrafting \tvphi\ based on prior knowledge about the environment or 
by learning it from data. When \tvphi\ is learned it is obviously not 
reasonable to expect an exact decomposition of the reward function, but even 
when it is handcrafted the resulting reward model can be only an approximation 
(for example, the ``object detector'' used by the agents in the experiments 
with the four-room environment could be noisy). 
Regardless of the source of the imprecision, we do not want 
SFQL in particular, and more generally any method using SFs, to completely 
break with the addition of noise to \tvphi. In Section~\ref{sec:experiments} we 
saw how an approximate \tvphi\ can in fact lead to very good performance. In 
this section we provide a more systematic investigation of this matter by 
analyzing the performance of SFQL with \tvphi\ corrupted in different ways.

\begin{figure}[hbt]
\centering
\includegraphics[scale=1]{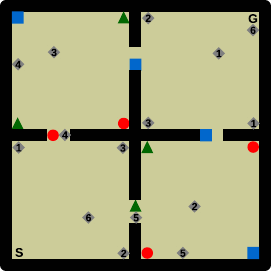}
\caption{Environment layout with additional objects.
Spurious objects are represented as diamonds; numbers indicate 
the classes of the objects.
\label{fig:forage_maze_spurious}}
\end{figure}

Our first experiment consisted in adding spurious features to $\vphi$ that do 
not help in any way in the approximation of the reward function. This reflects 
the scenario where the agent's set of predictions is a superset of the features 
needed to compute the reward (see in Section~\ref{sec:related_work} the 
connection with \ca{sutton2011horde}'s GVFs, \cy{sutton2011horde}). 
In order to implement this, we added objects to our environment that always 
lead to zero reward---that is, even though the SFs $\tvpsi^{\pi}$ learned by 
SFQL would predict the occurrence of these objects, such predictions would not 
help in the approximation of the reward function. Specifically, we added 
to our basic layout $15$ objects belonging to $6$ classes, as shown in 
Figure~\ref{fig:forage_maze_spurious},
which means that in this case SFQL used a $\tvphi \in \R^{10}$.
In addition to adding spurious features, we also corrupted  \tvphi\ by adding 
to each of its components, in each step, a sample from a normal distribution
with mean zero and different standard deviations $\sigma$.

The outcome of our experiment is shown in Figure~\ref{fig:sfql_noise}.
Overall, the results are as expected, with both spurious features and noise 
hurting the performance of SFQL. Interestingly, the two types of 
perturbations do not seem to interact very strongly, since their 
effects seem to combine in an additive way. More important, SFQL's performance 
seems to degrade gracefully as a result of either intervention, which 
corroborate our previous experiments showing that the proposed 
approach is robust to approximation errors in \tvphi.

\begin{figure}[t]
\centering
 \includegraphics[width=0.8\textwidth]{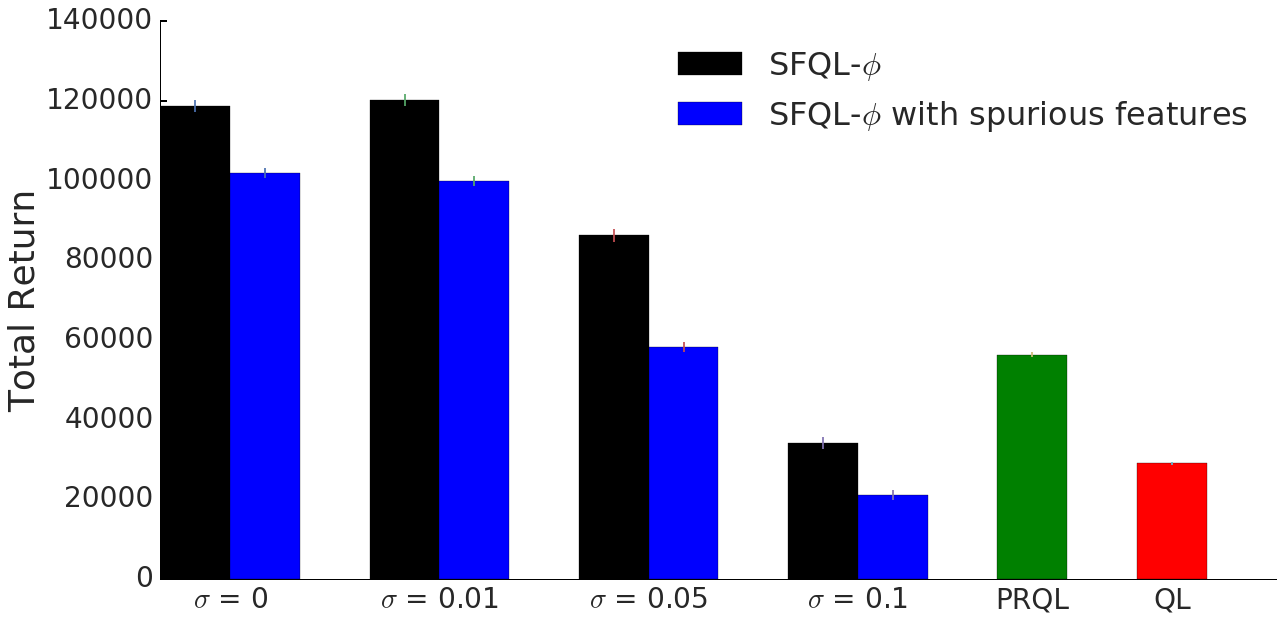}
 \caption{Analyzing SFQL's robustness to distortions in $\tvphi$.
 PRQL's and QL's results are shown for reference. Results averaged over $30$ 
runs;  standard errors are shown on top of each bar.
\label{fig:sfql_noise}}
\end{figure}

\makeatletter
\apptocmd{\thebibliography}{\global\c@NAT@ctr 29\relax}{}{}
\makeatother

\end{document}